\algrenewcommand\algorithmicrequire{\textbf{Input:}}
\algrenewcommand\algorithmicensure{\textbf{Output:}}
\definecolor{cvprblue}{rgb}{0.21,0.49,0.74}
\renewcommand{\vec}[1]{\boldsymbol{#1}}
\newcommand{\pose}[0]{\vec{\theta}}
\newcommand{\shape}[0]{\vec{\beta}}
\newcommand{\smpl}[0]{M}
\newcommand{\loss}[0]{\mathcal{L}}
\newcommand{\sota}{SotA}
\newcommand{\figref}[1]{Fig.~\ref{#1}}
\newcommand{\secref}[1]{Sec.~\ref{#1}}
\newcommand{\eqnref}[1]{Eq.~\eqref{#1}}
\newcommand{\tabref}[1]{Tab.~\ref{#1}}
\newcommand{\x}{\mathbf{x}}
\newcommand{\rb}{\mathbf{r}}
\newcommand{\Ab}{\mathbf{A}}
\newcommand{\G}{\mathbf{G}}
\newcommand{\Gx}{\mathbf{G}_{\mathbf{x}}}
\newcommand{\Id}{\mathbf{I}}
\newcommand{\Pm}{\mathbf{P}}
\newcommand{\z}{\mathbf{z}}
\newcommand{\q}{\mathbf{q}}
\newcommand{\eb}{\mathbf{e}}
\newcommand{\ub}{\mathbf{u}}
\newcommand{\vb}{\mathbf{v}}
\renewcommand{\v}{\mathbf{v}}
\newcommand{\J}{\mathbf{J}}
\newcommand{\bp}{\mathbf{p}}
\newcommand{\Exp}{\mathrm{{Exp}}}
\newcommand{\Log}{\mathrm{{Log}}}
\newcommand{\Quats}{\mathbb{H}_1}
\newcommand{\TqM}{\T_{\q}\Quats}
\newcommand{\TxS}{\T_{\x}\mathcal{S}}
\newcommand{\TposeQuats}{\T_{\pose}\Quats^K}
\newcommand{\y}{\mathbf{y}}
\newcommand{\R}{\mathbb{R}}
\newcommand{\E}{\mathbb{E}}
\newcommand*\diff{\mathop{}\!\mathrm{d}}
\newcommand{\bmu}{\bm{\mu}}
\newcommand{\etab}{\bm{\eta}}
\newcommand{\Amb}{\mathcal{X}}
\newcommand{\Man}{\mathcal{M}}
\newcommand{\curve}{\gamma}
\newcommand{\dcurve}{\dot{\gamma}}
\newcommand{\Loss}{\mathcal{L}}
\newcommand{\TiM}{\T_{\Id}\Man}
\newcommand{\TposeS}{\T_{\pose}\mathcal{S}}
\newcommand{\TxM}{\T_{\x}\Man}
\newcommand{\TeM}{\T_{\mathbf{e}}\Man}
\newcommand{\T}{\mathcal{T}}
\newcommand{\TM}{\mathcal{T}\mathcal{M}}
\newcommand{\grad}[1]{\mathrm{grad}#1}
\newcommand{\PM}{\mathcal{P}(\Man)}
\newcommand{\Unif}{\mathcal{U}}
\newcommand{\RDF}{\mathcal{S}}
\newcommand{\normal}{\mathcal{N}}
\newcommand{\lieq}{\mathfrak{h}}
\newcommand{\lieQ}{\mathfrak{h}^K}
\newcommand{\adj}{\mathfrak{ad}}
\newtheorem{lemma}{Lemma}
\newtheorem{prop}{Proposition}
\newtheorem{dfn}{Definition}
\crefname{eq}{eq}{eq}
\Crefname{Eq}{Eq}{Eq}
\crefname{thm}{theorem}{theorem}
\Crefname{Thm}{Theorem}{Theorem}
\crefname{prop}{Prop.}{Prop.}
\crefname{dfn}{Dfn.}{Dfn.}
\Crefname{Prop}{Proposition}{Proposition}
\crefname{remark}{remark}{remark}
\Crefname{Remark}{Remark}{Remark}
\Crefname{algorithm}{Alg.}{Alg.}
\DeclareMathOperator*{\argmin}{arg\min}
\newcommand\norm[1]{\lVert#1\rVert}
\renewcommand{\paragraph}[1]{{\vspace{0.6mm}\noindent \bf #1}.}
\newcommand{\newmodel}{NRDF}
\newcommand{\RDFGrad}{RDFGrad}
\newcommand{\newdist}{d_q}
\newcommand{\distnn}{d_{\mathrm{NN}}}
\newcommand{\fid}{FID}
\title{NRDF: Neural Riemannian Distance Fields for Learning Articulated Pose Priors
}
\author{\begin{tabular}{ccccccccccccccc}\multicolumn{3}{c}{Yannan He \textsuperscript{1,2}} & \multicolumn{3}{c}{Garvita Tiwari \textsuperscript{1,2,3}} & \multicolumn{3}{c}{Tolga Birdal \textsuperscript{4}} & \multicolumn{3}{c}{Jan Eric Lenssen \textsuperscript{3}} &  \multicolumn{3}{c}{Gerard Pons-Moll \textsuperscript{1,2,3}}\end{tabular}\\\\
{\small \textsuperscript{1}University of Tübingen, Germany \qquad \textsuperscript{2} Tübingen AI Center, Germany}\\
{\small\textsuperscript{3}Max Planck Institute for Informatics, Saarland Informatics Campus, Germany}\\
{\small\textsuperscript{4}Imperial College London, United Kingdom}\\
{\small \url{https://virtualhumans.mpi-inf.mpg.de/nrdf}
}}
\begin{document}
\makeatletter
\let\@oldmaketitle\@maketitle
\renewcommand{\@maketitle}{
	\@oldmaketitle
		\begin{center}
\centering
    \captionsetup{type=figure}
    \vspace{-20pt}
    \includegraphics[width=0.95\textwidth]{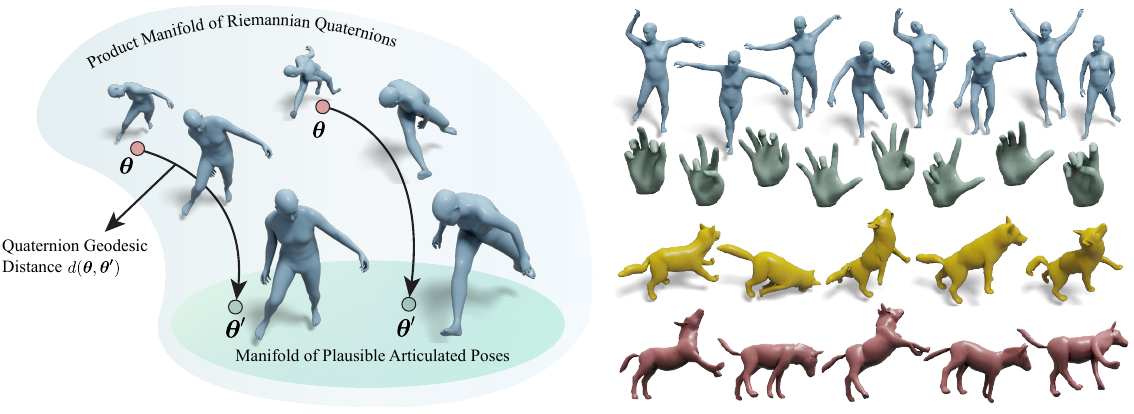}
    	\end{center}
     \vspace{-10pt}
\refstepcounter{figure}\normalfont Figure~\thefigure.  \emph{Left}: We present \textbf{Neural Riemannian Distance Fields (\newmodel{}s)}, a principled method to learn data-driven priors as subspace of high-dimensional Riemannian manifolds. \emph{Right}: \newmodel{}s can effectively model the pose of different articulated shapes. We present diverse samples generated using \newmodel{}s trained on human, hand, and animal poses respectively. 
	\label{fig:teaser}
	\newline
}
\makeatother
\maketitle

\begin{abstract}
Faithfully modeling the space of articulations is a crucial task that allows recovery and generation of realistic poses, and remains a notorious challenge.
To this end, we introduce Neural Riemannian Distance Fields (\newmodel{}s), data-driven priors modeling the space of plausible articulations, represented as the zero-level-set of a neural field in a high-dimensional product-quaternion space.
To train \newmodel{}s only on positive examples, we introduce a new \textbf{sampling algorithm}, ensuring that the geodesic distances follow a desired distribution, yielding a principled distance field learning paradigm.
We then devise a \textbf{projection algorithm} to map any random pose onto the level-set by an \textbf{adaptive-step Riemannian optimizer}, adhering to the product manifold of joint rotations at all times. 
\newmodel{}s can compute the Riemannian gradient via backpropagation and by mathematical analogy, are related to Riemannian flow matching, a recent generative model. 
We conduct a comprehensive evaluation of \newmodel{} against other pose priors in various downstream tasks, \emph{i.e.}, pose generation, image-based pose estimation, and solving inverse kinematics, highlighting \newmodel's superior performance. Besides humans, \newmodel's versatility extends to hand and animal poses, as it can effectively represent any articulation. 
\end{abstract}

\vspace{-5mm}
\section{Introduction}
\label{sec:introduction}
Pose and motion are ubiquitous yet very challenging and intriguing aspects of understanding articulated agents such as humans, animals or hands. Pose is intrinsic to the human experience, our interaction with each other and the environment. Understanding it is vital for applications in fields such as medicine, entertainment, AR/VR, \etc. As a result, human pose understanding, generation, and acquisition have been extensively studied in the domain of computer vision and graphics.

Acquisition using IMUs~\cite{HPS}, mocap markers~\cite{AMASS:2019}, and scans~\cite{dfaust} have accelerated the research direction by providing an enormous amount of data. These datasets are used to learn pose distributions, which are further used as priors in downstream tasks such as solving inverse-kinematics (IK), image HPS~\cite{SMPL-X:2019}, motion denoising, \etc. Previous work in this domain have used GMMs~\cite{bogo2016keep}, VAEs~\cite{SMPL-X:2019} and GANs~\cite{Davydov_2022_CVPR} to model pose prior. However, these methods are either limited by Gaussian assumptions~\cite{bogo2016keep,SMPL-X:2019} or risk suffering from instability of the training process~\cite{Davydov_2022_CVPR}. 

In this work, we propose \textbf{Neural Riemannian Distance Fields (\newmodel{}s)}, implicit, neural distance fields (NDFs)~\cite{Park_2019_CVPR,chibane2020ndf} constructed on the space of plausible and realistic articulations.
\newmodel s are induced by the geodesic distance on the product manifold of quaternions and are trained to predict the Riemannian distance.
In order to learn well-defined and detailed pose manifolds, we diligently study the role of training data distribution in learning distance field priors. To draw more samples near the surface with a gradual decrease for faraway regions~\cite{ShapeGF}, we introduce a \textbf{wrapped sampling algorithm on Riemannian manifolds} that allows explicit control over the resulting distance distribution. We show that heuristics developed in the past~\cite{tiwari22posendf} sample points around the surface and often do not lead to desired distribution characteristics.
The effect gets exacerbated in high-dimensional spaces like the product space of articulated bodies.

One of the key benefits of pose priors is the ability to map an arbitrary articulation onto a plausible one~\cite{tiwari22posendf}. To this end, we introduce an \textbf{adaptive-step Riemannian gradient descent} algorithm, \textbf{\RDFGrad}, in which the gradient obtained by a backward pass, scaled to the predicted distance, is used to update an articulated pose, respecting the product manifold of quaternions at all times. This is in stark contrast to Pose-NDF~\cite{tiwari22posendf}, which uses a Euclidean gradient descent to approximate the projection onto the articulation manifold. As a result, every projection step is followed by a re-projection onto joint rotations, resulting in slower convergence. Our manifold-aware formulation ensures the iterates to remain as articulated bodies.
Our models bear similarities to Riemannian Flow Matching (RFM)~\cite{chen2023riemannian,lipman2022flow}, the recent state-of-the-art generative models, as we explain. In fact, \newmodel~obtains the required gradients by backpropagation, whereas RFM is explicitly trained on them.

In summary, \textbf{our contributions are}:
\begin{itemize}
\item [$\bullet$] A principled framework for learning NDFs on Riemannian manifolds, with strong ties to flow matching
\item [$\bullet$] A theoretically sound adaptive-step Riemannian gradient descent algorithm that leads to accelerated convergence in mapping poses onto the learned manifold
\item [$\bullet$] A versatile framework for sampling training data, crucial for pose-manifold learning
\end{itemize}
The efficacy of \newmodel{} is shown in a range of downstream tasks, such as pose generation, solving inverse kinematics (IK) problems, and pose estimation from images. 
We observe that the \newmodel{}-based pose prior outperforms earlier works such as VPoser~\cite{SMPL-X:2019}, GAN-S~\cite{Davydov_2022_CVPR}, GFPose~\cite{ci2022gfpose}, Pose-NDF~\cite{tiwari22posendf} on aforementioned tasks, under pose distance metrics. We also conduct a user study about the perceptual quality of different pose distance metrics. As \newmodel{} can be easily applied to any articulated shape, we also evaluate our model on hand poses and animal poses.

\vspace{-2mm}
\section{Related Work}
\vspace{-1.5mm}
We now review articulated (\eg human) pose and motion priors crucial for understanding human pose from images~\cite{bogo2016keep,hmrKanazawa17,SPIN:ICCV:2019,bhatnagar2019mgn}, videos~\cite{vibe,stoll_videobased}, IMUs~\cite{HPS,bhatnagar22behave,vonMarcard2018} and scans~\cite{dfaust,alldieck19cvpr}.
As we explore the connection to flow-matching models, we also review related literature therein.

\paragraph{Early pose priors}
Initial works in modeling robust pose prior learn constraints for joint limits in Euler angles~\cite{euler_jointlimit} or swing and twist representations~\cite{range_ballsocket_twist,shao_realisitic,akhter_jointlimits}. However, these methods mainly rely on small-scale datasets and can still produce unrealistic poses due to unreal combinations of different joints. This was followed by more sophisticated models such as GMMs~\cite{bogo2016keep} or PCAs~\cite{pca_cyclic,3dpeople_gaussian,Black:ECCV:2000}. 

\paragraph{VAEs and GANs}
Recent generative deep learning models 
have harnessed large-scale datasets to train VAEs~\cite{SMPL-X:2019,rempe2021humor,zhang2021learning,ACTOR:ICCV:2021} and GANs~\cite{Davydov_2022_CVPR,hmrKanazawa17,BarsoumCVPRW2018} as pose/motion priors, either in a task-dependent~\cite{hmrKanazawa17,BarsoumCVPRW2018} or task-independent~\cite{Davydov_2022_CVPR,SMPL-X:2019} manner. For instance, \cite{hmrKanazawa17} trains a GAN for $p(\pose|I)$ for image-based pose estimation, HP-GAN~\cite{BarsoumCVPRW2018} models $p(\pose_t|\pose_{t-1})$, representing the current pose given the previous one. HuMoR~\cite{rempe2021humor} proposes to learn a distribution of possible pose transitions in motion sequences using a conditional VAE. ACTOR~\cite{ACTOR:ICCV:2021} learns an action-conditioned VAE-Transformer prior. On the other hand, more task-independent models such as VPoser~\cite{SMPL-X:2019} learn a VAE using AMASS~\cite{AMASS:2019} dataset. However, because of Gaussian assumptions in the latent space, the model is biased towards generating mean poses, along with the risk of generating unrealistic poses from dead-regions of Gaussians~\cite{tiwari22posendf,Davydov_2022_CVPR}.~\cite{Davydov_2022_CVPR} learn a human pose prior using GANs, overcoming the limitations of Gaussians, but requires training a GAN, which is known to be unstable~\cite{NIPS2016_8a3363ab}.

\paragraph{Probabilistic flow}
More recently, pose and motion prior have also been developed using widely popular diffusion-based models~\cite{song2021scorebased,song_gradient,ho2020denoising}. GFPose learns score function (gradient of log-likelihood) of a task conditional distribution.
Likewise, MDM~\cite{tevet2023human} and MoFusion~\cite{dabral2022mofusion} model motion sequences conditioned on tasks through a diffusion process. Normalizing flows~\cite{Kobyzev_2021} have been applied in human pose-related tasks for a while, to address the ambiguous inverse 2D-to-3D problem~\cite{WehRud2021}, or recently to perform anomaly detection~\cite{hirschorn2023normalizing}. Flow Matching was also introduced to Riemannian manifolds~\cite{chen2023riemannian}. We show that our method has strong ties to the flow matching principle and apply it to pose for the first time.

\paragraph{Distance fields}
Closest to our work is Pose-NDF~\cite{tiwari22posendf}, which also models the manifold of plausible human poses using neural distance fields. Pose-NDF uses the learned distance field and its gradient to project arbitrary poses onto a manifold, using Euclidean gradient descent, where every step is followed by a re-projection onto the $SO(3)$. This results in slower convergence. In contrast, we leverage  an adaptive-step Riemannian gradient descent which ensures that the iterates always remain on $SO(3)^K$, yielding faster convergence. Moreover, Pose-NDF's training data generation is naively engineered, requiring a difficult per-task fine-tuning. We introduce a novel sampling method based on recent advances in scheduled optimal transport sampling~\cite{chen2023riemannian}, to create training data which results in robust learning without the need of manual, task-specific tuning.

\section{Background}
\label{sec:background}
We first introduce the necessary preliminaries to define our Riemannian distance fields.
Following~\cite{birdal2018bayesian,birdal2019probabilistic,chen2022projective}, we define an $m$-dimensional \textit{Riemannian manifold}, embedded in an ambient Euclidean space $\Amb = \R^d$ and endowed with a \textit{Riemannian metric} $\G\triangleq (\Gx)_{\x\in\Man}$ to be a smooth curved space $(\Man,\G)$. A vector $\v\in\Amb$ is said to be \emph{tangent} to $\Man$ at $\x$ \emph{iff} there exists a smooth curve $\curve:[0,1]\to\Man$ s.t. $\curve(0)=\x$ and $\dcurve(0)=\v$. The velocities of all such curves through $\x$ form the \emph{tangent space} $\TxM=\{ \dcurve (0) \,|\, \curve:\R\to\Man \text{ is smooth around $0$ and } \curve(0)=\x\}$, whose union is called the \emph{tangent bundle}: $\TM=\bigcup_{\x}\TxM=\{(\x,\v)\mid\x\in\Man, \v\in\TxM\}$. 
The Riemannian  metric $G(\cdot)$ equips each point $\x$ with an inner product in the tangent space $\TxM$, $\langle\ub, \vb\rangle_x = \ub^T\Gx\vb$.
We will also work with a product of $K$ manifolds, $\Man_{1:K}\vcentcolon=\Man_1\times\Man_2\times\dots\times\Man_K$, For identical manifolds, \ie $\Man_i\equiv\Man_j$, we recover the \emph{power manifold}, $\Man^K\vcentcolon=\Man_{1:K}$, whose tangent bundle admits the \emph{natural isomorphism}, $\mathcal{T}\Man^K\simeq(\TM\times\dots\times\TM)$.
We now define the operators required for our algorithm.
\vspace{-1mm}
\begin{dfn}[Riemannian Gradient]
For a smooth function $f:\Man\to\R$ and $\forall (\x,\v)\in \TM$, 
we define the \emph{Riemannian gradient} of $f$ as the unique vector field $\grad{f}$ satisfying~\cite{boumal2020introduction}:
\vspace{-1mm}
\begin{equation}
    \mathrm{D}f(\x)[\v] = \langle \v, \grad{f(\x)} \rangle_\x
\vspace{-1mm}
\end{equation}
where $\mathrm{D}f(\x)[\v]$ is the derivation of $f$ by $\v$. It can further be shown (see our supplementary) that an expression for $\grad{f}$ can be obtained through the projection of the \emph{Euclidean} gradient orthogonally onto the tangent space
\vspace{-1mm}
\begin{equation}\label{eq:gradf}
\grad{f(\x)} = \nabla f(\x)_{\|} = \Pi_{\x}\big( \nabla f(\x)\big).    
\vspace{-1mm}
\end{equation}
where $\Pi_{\x}:\Amb\to\TxM\subseteq \Amb$ is an orthogonal projector with respect to $\langle \cdot,\cdot \rangle_{\x}$. 
\end{dfn}
In most packages such as ManOpt~\cite{pymanopt},~\cref{eq:gradf} is known as the \emph{egrad2rgrad}.
\vspace{-1mm}
\begin{dfn}[Riemannian Optimization]
We consider gradient descent to solve the problems of $\min_{\x\in\Man}f(\x)$. For a local minimizer or a \emph{stationary point} $\x^\star$ of $f$, the Riemannian gradient vanishes $\grad{f(\x^\star)}=0$ enabling a simple algorithm, \emph{Riemannian Gradient Descent} (RGD):
\vspace{-0.5\baselineskip}
\begin{align}\label{eq:RGD}
\x_{k+1} = R_{\x_k}(-\tau_k\,\grad{f(\x_k)})
\end{align}

where $\tau_k$ is the step size at iteration $k$ and $R_{\x_k}$ is the \emph{retraction} usually chosen related to the exponential map. 
Note that both RGD and its stochastic variant~\cite{bonnabel2013stochastic} are practically convergent~\cite{boumal2020introduction,munier2007steepest,zhang2016first,bonnabel2013stochastic,tripuraneni2018averaging}. Though, only in rare cases is $\tau_k$ analytically computable. Therefore, most minimizers use either Armijo or Wolfe line-search ~\cite{absil2009accelerated}.
\end{dfn}

\paragraph{Quaternions $\Quats$}
A unit quaternion $\q\in\Quats$ represents a rotation using a 4D unit vector $[w\vcentcolon= q_1, \vb\vcentcolon= (q_2,q_3,q_4)]$ double covering the non-Euclidean $3$-sphere, \ie, $\q\equiv-\q$ identify the same rotation. The inverse or \emph{conjugate} of $\q$ is given by $\bar{\q}\vcentcolon=\q^{-1}=(w,-\vb)$, whereas
the non-commutative multiplication of two quaternions $\q=(q_1, \vb_q)$ and $\rb=(r_1, \vb_r)$ is defined to be $\q\otimes\rb \vcentcolon=\q\rb\vcentcolon=
({q}_1{r}_1-\vb_p\cdot \vb_r,\,{p}_1\vb_r+{r}_1 \vb_p+\vb_p \times \vb_r)$.
Following~\cite{angulo2014riemannian,birdal2020synchronizing}, we now briefly explain the Lie group structure of the quaternions essential for manifold optimization.
\begin{dfn}[Exponential map]
The \emph{exponential map} $\Exp_{\q}(\cdot)$ maps any vector in $\TqM$ onto $\Quats$: 
\vspace{-0.5\baselineskip}
\begin{align}
    \Exp_{\q}(\etab)=\q\exp(\etab) = \q\Big( \cos(\theta), \vb \frac{\sin(\theta)}{\theta}\Big),
\end{align}
where $\etab=(w, \vb)\in \TqM$ and $\theta = \| \vb \|$.
\end{dfn}
\begin{dfn}[Logarithmic map]
The inverse of $\emph{exp-map}$, $\Log_{\q}(\bp) : \Quats \to \TqM$ is $\emph{log-map}$ and defined as:
\vspace{-0.5\baselineskip}
\begin{align}
    \Log_{\q}(\bp)=\log(\q^{-1}\bp) = \Big(0, \frac{\vb}{\|\vb\|} \arccos(w) \Big),
\end{align}
\vspace{-0.5\baselineskip}
this time with a slight abuse of notation $\q^{-1}\bp=(w, \vb)$. 
\end{dfn}
\begin{dfn}[Quaternion geodesic distance ($\newdist$)]
\label{dfn:riemann_dist}
Let us rephrase the Riemannian distance between two unit quaternions using the logarithmic map, whose norm is the length of the shortest geodesic path. Respecting the antipodality:
\begin{align}
    d(\q_1,\q_2) = \begin{cases}
    \| \Log_{\q_1}(\q_2) \| \,\,\,\,=  \arccos(w), & w\geq 0 \\
    \| \Log_{\q_1}(-\q_2) \| =\arccos(-w), & w<0
    \end{cases}\nonumber
\end{align}
where $\q_1^{-1}\q_2 = (w,\vb)$.
\end{dfn}

\section{Neural Riemannian Distance Fields}

We start by explaining \emph{Riemannian Distance Fields} to model realistic articulated shapes in~\secref{sec:model} and introduce our novel projection algorithm to map onto this space while adhering to the manifold of joint rotations. We then propose \newmodel~and a novel method for sampling articulated poses, to generate desired training data, in~\secref{sec:nrdf}. We conclude this section by forming a link between recently-popularized flow matching models~\cite{chen2023riemannian} and \newmodel.

\subsection{Modeling of Plausible Articulated Poses}
\label{sec:model}
We parameterize the pose of a 3D articulated body composed of $K$ joints, $\pose\vcentcolon=\{\q_i\in\Quats\}_{i=1}^K$, on the power manifold of quaternions $\Quats^K=\Quats\times\dots\times\Quats$. 
\begin{dfn}[Geometry of 3D articulated poses]\label{dfn:humanpose}
    $\Quats^K$ turns into a Riemannian manifold $(\Quats^K,\G^K)$ when endowed with the $L_p$ \emph{product metric} $d_{\Quats^K}:\Quats^K\times\Quats^K\to\R$:
    \begin{align}
        d_{\Quats^K}(\pose,\pose^\prime)=\| d(\q_1,\q_1^\prime),d(\q_2,\q_2^\prime),\dots,d(\q_K,\q_K^\prime)\|_p,\nonumber
    \end{align}
    where $\q\in\pose\in\Quats^K$ and $\q^\prime\in\pose^\prime\in\Quats^K$. In this work, we use $p=1$. The natural isomorphism further allows us to write its exponential map $\Exp_{\pose}:\T{\Quats^K}\to\Quats^K$ component-wise: $\Exp_{\pose}=\left(\Exp_{\q_1},\Exp_{\q_2},\dots,\Exp_{\q_K}\right)$. Akin to this, is the logarithmic map, $\Log_{\pose}$. 
    Since the tangent spaces and therefore $\Pi_{\pose}$ are replicas, the gradient of a smooth function $f:\Quats^K\to\R$ w.r.t. $\pose$ is also the Cartesian product of the individual gradients:
    \begin{align}
        \grad_{\pose}{f(\pose)} = \left(\grad_{\q_1}{f(\pose)},\,\dots,\,\grad_{\q_K}{f(\pose)}\right).
    \end{align}
\end{dfn}

\begin{dfn}[Riemannian Distance Fields (RDFs)]
    Given the parameterization above, we model the manifold of \emph{realistic and plausible articulations} (as defined by a dataset) on the zero level set of a model $f_\phi: \Quats^K \to \R^+$:
    \begin{equation}
\mathcal{S} = \{\pose \in \Quats^K \mid f_\phi(\pose) = 0\} \textrm{,}
\end{equation}
such that the value of $f$ represents the unsigned geodesic distance to the closest plausible pose on the manifold. 
\end{dfn}

\begin{prop}[\RDFGrad{}]\label{prop:RDFGrad}
Given $\RDF$ (hence $f$), we employ an adaptive-step Riemannian optimizer, to project any pose $\pose_0$ onto the plausible poses:
\begin{equation}\label{eq:rie_proj}
\pose_{k+1} = \mathrm{Exp}_{\pose_k}\left(-\alpha f(\pose_k)\,\frac{\grad{f(\pose_k)}}{\|\grad{f(\pose_k)}\|}\right)\textnormal{.}
\end{equation}
\end{prop}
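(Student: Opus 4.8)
The plan is to establish three facts in order: that the update \eqref{eq:rie_proj} is a genuine instance of the Riemannian gradient descent in \eqref{eq:RGD}, that its iterates never leave $\Quats^K$, and that the specific adaptive step is the principled choice forced by $f$ being a geodesic distance field. First I would recast \eqref{eq:rie_proj} in the RGD template by taking the retraction to be the exponential map, $R_{\pose_k}=\Exp_{\pose_k}$, and reading off the state-dependent step size $\tau_k = \alpha\, f(\pose_k)/\|\grad{f(\pose_k)}\|$, so that the update is literally $\pose_{k+1}=R_{\pose_k}(-\tau_k\,\grad{f(\pose_k)})$. This identification is the crux: it exhibits \RDFGrad{} as RGD with an \emph{adaptive} step, so the convergence results cited for RGD carry over. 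Manifold adherence is then immediate, since $\grad{f(\pose_k)}\in\T_{\pose_k}\Quats^K$ by \eqref{eq:gradf} and $\Exp_{\pose_k}$ returns a point on $\Quats^K$; by the natural isomorphism of \cref{dfn:humanpose} the map acts component-wise, so each quaternion stays unit-norm and, unlike a Euclidean update, no re-projection is ever needed.

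Next I would pin down the geometric meaning of the step. The tangent vector actually traversed is $\v_k=-\alpha f(\pose_k)\,\grad{f(\pose_k)}/\|\grad{f(\pose_k)}\|$, whose norm is $\|\v_k\|=\alpha f(\pose_k)$ because the direction is unit-length. As $\Exp_{\pose_k}$ traces the geodesic leaving $\pose_k$ with velocity $\v_k$, the geodesic arc length between $\pose_k$ and $\pose_{k+1}$ equals $\|\v_k\|=\alpha f(\pose_k)$. Hence each step moves along the steepest-descent geodesic by an amount \emph{proportional to the predicted distance-to-manifold} --- the Riemannian analogue of sphere tracing with a distance field.

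Finally I would justify why this is the right amount. A true unsigned geodesic distance obeys the eikonal identity $\|\grad{f}\|=1$ almost everywhere; under it the normalization in \eqref{eq:rie_proj} is a no-op, and with $\alpha=1$ following the negative-gradient geodesic for arc length exactly $f(\pose_k)$ lands on the closest point of $\RDF$, i.e. the metric projection, whenever that point is joined to $\pose_k$ by a unique minimizing geodesic within the injectivity radius. For a \emph{learned} $f$ the identity holds only approximately, so dividing by $\|\grad{f(\pose_k)}\|$ restores the intended arc length and $\alpha\in(0,1]$ damps the step; a routine descent argument then gives $f(\pose_{k+1})\le f(\pose_k)$ for small enough $\alpha$, driving $\pose_k\to\RDF$. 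The main obstacle is making this projection-onto-level-set claim rigorous: the clean one-step picture needs the eikonal property, uniqueness of the closest point, and staying inside the injectivity radius so that $\Exp$ is a diffeomorphism onto its image, all of which a network-parameterized $f$ can violate near the cut locus. I therefore expect the honest statement to be a descent guarantee for sufficiently small $\alpha$ together with the eikonal-limit interpretation, with the real work lying in controlling the gap between the learned $f$ and an ideal distance function.
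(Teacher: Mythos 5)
Your identification of \eqref{eq:rie_proj} as Riemannian gradient descent \eqref{eq:RGD} with retraction $R_{\pose_k}=\Exp_{\pose_k}$ and state-dependent step $\tau_k=\alpha f(\pose_k)/\|\grad{f(\pose_k)}\|$ is exactly the justification the paper intends --- it gives no separate proof of this proposition and simply defers to the RGD framework of the background section, so your approach matches the paper's. Your further eikonal/sphere-tracing interpretation and the caveats about a learned $f$ go beyond what the paper states but are consistent with it.
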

The details are given in~\cref{sec:background}. This procedure is in contrast to Pose-NDF~\cite{tiwari22posendf}, which uses a \emph{projected} (Euclidean) gradient descent to approximate the projection onto the manifold.
We therefore require the expression for projecting onto the tangent space of a quaternion, whose explicit form seems to be lacking in the literature. In what follows, we derive this operator.
\begin{prop}[Quaternion-egrad2rgrad]\label{prop:egrad2rgrad}
For the quaternion manifold, the projection and mapping onto the tangent space of the canonical unit quaternion $\eb=\begin{bmatrix} 1 & 0 & 0 & 0 \end{bmatrix}^\top$ (egrad2rgrad in~\cref{eq:gradf}) takes the form:
\begin{align}\label{eq:egrad2rgrad}
    \Pi_{\q}(\vb) &= \Pm\vb - \frac{\eb^\top \Pm\vb}{1+\q^\top \eb} (\q+\eb)\\
    &= \begin{bmatrix}
        0 &  0 & 0 & 0\\
        -q_2/(1+q_1) &  1 & 0 & 0\\
        -q_3/(1+q_1) &  0 & 1 & 0\\
        -q_4/(1+q_1) &  0 & 0 & 1
    \end{bmatrix}\Pm\vb \textnormal{,}
\end{align}
where $\vb\in\R^4$ and $\Pm(\q)=\Id-\q\q^\top$.

\end{prop}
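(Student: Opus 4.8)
The plan is to read off $\Pi_\q$ as the composition of two canonical maps and then collapse the result to the stated matrix by elementary algebra. Since $\Quats$ is the unit $3$-sphere $S^3\subset\R^4$ with the induced round metric, the embedded tangent space at $\q$ is $\TqM=\{\vb\in\R^4\mid\q^\top\vb=0\}$, and the orthogonal projector onto it with respect to $\langle\cdot,\cdot\rangle$ is $\Pm(\q)=\Id-\q\q^\top$ (using $\norm{\q}=1$). By~\cref{eq:gradf}, this projection already realizes the Riemannian gradient as an element of $\TqM$. The extra content of the proposition is to re-express that vector in the fixed tangent space $\T_{\eb}\Quats$ at the identity quaternion, i.e. the purely imaginary quaternions. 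I would therefore factor $\Pi_\q=\mathcal{P}_{\q\to\eb}\circ\Pm$, where $\mathcal{P}_{\q\to\eb}:\TqM\to\T_{\eb}\Quats$ is the parallel transport along the unique minimizing geodesic joining $\q$ to $\eb$.

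The crux is a closed form for $\mathcal{P}_{\q\to\eb}$. The minimizing geodesic between two non-antipodal unit vectors lies in the plane $\mathrm{span}\{\q,\eb\}$, so parallel transport fixes $\mathrm{span}\{\q,\eb\}^\perp$ pointwise and acts within the plane as the rotation by the geodesic angle $\phi=\arccos(\q^\top\eb)$. Rather than grinding through the planar rotation in the $\{\q,\eb\}$ frame, I would posit the candidate $\mathcal{P}_{\q\to\eb}(\v)=\v-\frac{\eb^\top\v}{1+\q^\top\eb}(\q+\eb)$ and verify its defining properties directly: it is linear, a one-line check gives $\eb^\top\mathcal{P}_{\q\to\eb}(\v)=0$ so it maps into $\T_{\eb}\Quats$, it fixes every vector orthogonal to both $\q$ and $\eb$, and it restricts to an isometry on $\TqM$ sending the geodesic velocity at $\q$ to that at $\eb$; uniqueness of parallel transport then identifies it. Substituting $\v=\Pm\vb$ reproduces the first displayed expression for $\Pi_\q(\vb)$.

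To obtain the matrix form I would substitute $\eb=\begin{bmatrix}1&0&0&0\end{bmatrix}^\top$, so that $\q^\top\eb=q_1$, $\eb^\top\Pm\vb$ is the first coordinate of $\u\vcentcolon=\Pm\vb$, and $\q+\eb=\begin{bmatrix}1+q_1&q_2&q_3&q_4\end{bmatrix}^\top$. The correction then subtracts $\frac{u_1}{1+q_1}(\q+\eb)$, which cancels the first coordinate to $0$ and maps the $j$-th coordinate ($j\ge2$) to $u_j-\frac{q_j}{1+q_1}u_1$. This is exactly the action of the displayed matrix on $\u=\Pm\vb$, so the two expressions coincide and the second equality follows.

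The main obstacle is the second step: recognizing the ``mapping onto the tangent space at $\eb$'' as the sphere's parallel transport and pinning down its closed form. Care is needed on three points: the map must be an isometry, so that the normalization of the gradient in~\cref{eq:rie_proj} is preserved under the change of base point; it must be applied after, not before, the projection $\Pm$, since transporting the unprojected $\nabla f$ would carry its normal component $\q\q^\top\nabla f$ into a spurious tangent contribution at $\eb$; and the formula degenerates precisely at the antipode $\q=-\eb$, where $1+q_1=0$ and no minimizing geodesic exists, which should be flagged as the domain restriction.
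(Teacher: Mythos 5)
Your proposal is correct and follows essentially the same route as the paper's proof: first project onto $\TqM$ via $\Pm(\q)=\Id-\q\q^\top$, then carry the result to $\T_{\eb}\Quats$ by the planar rotation aligning $\q$ with $\eb$, which the paper likewise identifies with parallel transport (up to the rotational ambiguity it notes via the hairy ball theorem). Your added care about applying the transport only after $\Pm$, its isometry property, and the degeneracy at $\q=-\eb$ is sound and slightly more explicit than the paper's sketch, but does not constitute a different argument.
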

\begin{proof}[Sketch of the proof]
    The full proof uses the projection operator of ${\mathcal{S}}^3$ as well as the \emph{parallel transport} of the quaternion manifold. We leave the full proof to our supplementary.
\end{proof}

\subsection{Learning RDFs}
\label{sec:nrdf}
We now describe how we construct $\RDF$, \ie, learn $f_{\phi}$.
\begin{dfn}[Neural RDFs (\newmodel{}s)]
    We model $f$ using a combination of hierarchical network and an MLP decoder, similar to Pose-NDF~\cite{tiwari22posendf}. Given a dataset $\mathcal{D} = \{\pose_i\}_{1\leq i \leq N}$ of articulated poses and a scheduled sampler for network inputs, the network is trained to predict the distance to the closest example from dataset $\mathcal{D}$: 
\begin{equation}
\label{eq:training}
{\phi^\star} = \argmin_{\phi} \sum\limits_{i=1}^N \norm{ f_\phi(\pose_i) - \min_{\pose^\prime\in\mathcal{D}} d(\pose_i, \pose^\prime)} \textnormal{.}
\end{equation}
We call $f_{\phi^\star}$, learned in this way, an \emph{NRDF}. We obtain $\grad f_{\phi^\star}(\pose)$ via backprop. followed by an \emph{egrad2rgrad}.
\end{dfn}
\vspace{-2mm}

\paragraph{Sampling training data}
While the positive examples, lying on $\RDF$ are provided, the model performance strongly depends upon the statistical distribution of training examples and including  sensible negative samples ($d>0$) is critical.
This is also observed in the training of general neural distance fields for tasks like 3D shape reconstruction or completion~\cite{chibane20ifnet,ShapeGF,tiwari21neuralgif}. To effectively capture intricate details of the pose manifold, it is essential to have an abundance of samples in proximity to the pose manifold $d<\epsilon$, gradually decreasing as we move away from it. This ensures that the network sees data points spanning the entire space, resulting in a well-behaved and continuous learned distance field. 

Pose-NDF~\cite{tiwari22posendf} samples a training pose as $\frac{\pose+\epsilon}{\norm{\pose+\epsilon}_2}$, where $\pose \sim \mathcal{D}$ and $\epsilon \sim \normal(0,\sigma\mathbf{I})\in\R^{4K}$.
We observe the following: \textbf{(1)} This specific sampling technique leads to distances, which are roughly $\mathcal{X}$-distributed for large $k$ before projection, as shown in Fig.~\ref{fig:distance_distr}{\color{red}a}, as the distance is the sqrt-sum of squared Normal distributions with variance $\sigma$. This is contrary to the goal that the data should contain more samples close to the manifold. \textbf{(2)} Simply corrupting data samples by Euclidean noise does not expose explicit control over the distribution of generated distances, complicating the design of a schedule adhering to distance-related conditions. 

In the following, we propose a framework for data sampling that allows for explicit control over generated distance distributions. As outlined in~\cref{alg:sampledata_jan}, given an arbitrary distribution $\mathcal{P}$ over $\mathbb{R}^+$ and an input pose $\pose$, the algorithm first samples a distance $h \in \mathbb{R}$ and then generates an example, $h$ apart from $\pose$. It does so by sampling a direction $\mathbf{v} \in \mathcal{T}_{\pose}\Man^K$ independently from $h \in \mathcal{P}$, before finding the new pose via interpolation in tangent space.
We can now show that the distances $h$ sampled this way translate to the examples.

\begin{algorithm}[t]
\caption{Sampling in $SO(3)$ for articulated poses}
    \begin{algorithmic}[1]
    \Require Data example $\pose$, distribution $\mathcal{P}$
    \Ensure A pair $(\hat{\pose}, h)$, input to the network.

    \hspace{-1.38cm} Sample distance from arbitrary $\mathcal{P}$:
    \State $h \sim \mathcal{P}$, $h \in \mathbb{R}^+$

    \hspace{-1.3cm}Sample direction $\mathbf{v}$ uniformly from unit sphere in $\mathcal{T}_{\pose}\Quats^K$:
     \State $\mathbf{v} \sim \mathcal{N}_{\mathcal{T}_{\pose}\Quats^K}(\vec{0}, \vec{1}), \mathbf{v} \in \mathcal{T}_{\pose}\Quats^K$
    \State $\mathbf{v} \leftarrow {\mathbf{v}}/{\norm{\mathbf{v}}}$

    \hspace{-1.3cm}Interpolate in $\mathcal{T}_{\pose}\Quats^K$ and map to $\Quats^K$:
    \State  $ \hat{\pose} \leftarrow\text{Exp}_{\pose}(h \mathbf{v})$ 
    \end{algorithmic}
    \label{alg:sampledata_jan} 
\end{algorithm}
\setlength{\textfloatsep}{5pt}
\vspace{-1mm}
\begin{prop}[Distance preservation]\label{prop:distance_pre}
Let $\mathcal{P}$ be a distribution over domain $[0,1]$, $\pose \in \mathcal{D}$ a data example, $\hat{\pose} \in \Quats^K$ the output of Alg~\ref{alg:sampledata_jan} with input $(\pose, \mathcal{P})$, and $d= d(\pose, \hat{\pose})$. Then, for the distribution of resulting distances holds $p(d) = \mathcal{P}$.

\end{prop}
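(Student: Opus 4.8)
The plan is to show that the random distance $d$ is, conditionally on the sampled direction $\mathbf{v}$, a deterministic function of $h$ — in fact equal to $h$ itself — so that the law of $d$ is simply the pushforward of $\mathcal{P}$ under the identity map, hence $\mathcal{P}$ again. Concretely, I would first prove that $d(\pose,\hat{\pose})=h$ for \emph{every} admissible unit direction $\mathbf{v}$ and every $h$ in the support $[0,1]$, and then observe that, because the independently sampled $\mathbf{v}$ does not affect the distance at all (it only selects which point on the geodesic sphere of radius $h$ we land on), integrating it out leaves $p(d)=\mathcal{P}$.

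For the geometric core I would exploit the component-wise structure of the power manifold from \cref{dfn:humanpose}. Writing $\mathbf{v}=(\mathbf{v}_1,\dots,\mathbf{v}_K)\in\mathcal{T}_{\pose}\Quats^K$, the exponential map factorizes, so $\hat{\pose}=\big(\Exp_{\q_1}(h\mathbf{v}_1),\dots,\Exp_{\q_K}(h\mathbf{v}_K)\big)$. On each quaternion factor the curve $t\mapsto\Exp_{\q_i}(t\mathbf{v}_i)$ is a constant-speed geodesic of speed $\|\mathbf{v}_i\|$, so $d(\q_i,\Exp_{\q_i}(h\mathbf{v}_i))=h\,\|\mathbf{v}_i\|$. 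Assembling these through the product metric of \cref{dfn:humanpose} gives $d(\pose,\hat{\pose})=\big\|(h\|\mathbf{v}_1\|,\dots,h\|\mathbf{v}_K\|)\big\|_p=h\,\big\|(\|\mathbf{v}_1\|,\dots,\|\mathbf{v}_K\|)\big\|_p$, and the normalization step $\mathbf{v}\leftarrow\mathbf{v}/\|\mathbf{v}\|$ forces the trailing norm to one, yielding $d(\pose,\hat{\pose})=h$. The one subtlety to state carefully here is that the norm used to normalize $\mathbf{v}$ must be the same one that defines the product metric, so that $\mathbf{v}$ is genuinely a unit-speed direction for the geodesic whose length we are measuring.

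The main obstacle — and, I expect, the reason the statement restricts $\mathcal{P}$ to $[0,1]$ — is the cut-locus issue: the identity $d(\q_i,\Exp_{\q_i}(h\mathbf{v}_i))=h\|\mathbf{v}_i\|$ holds only while that geodesic remains \emph{minimizing}, i.e. while $h\|\mathbf{v}_i\|$ stays below the injectivity radius of the quaternion manifold. By \cref{dfn:riemann_dist} the antipodal identification caps this radius at $\pi/2$, and since $\|\mathbf{v}_i\|\le\|\mathbf{v}\|=1$ and $h\le 1<\pi/2$, every factor stays strictly inside the injectivity radius; hence each geodesic is length-minimizing and the measured distance equals its arc length $h$ rather than some shorter ``wrapped-around'' value. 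With this length-equals-parameter fact established, the probabilistic conclusion is immediate: conditionally on $\mathbf{v}$ one has the deterministic equality $d=h$, independent of $\mathbf{v}$, so the marginal law of $d$ equals the law of $h$, namely $p(d)=\mathcal{P}$.
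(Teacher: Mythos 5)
Your proof is correct and rests on the same core fact as the paper's: the exponential map sends a radius-$h$ segment in the tangent space to a geodesic whose distance to the base point is exactly $h$, so conditionally on the direction $\mathbf{v}$ the distance $d$ equals $h$ deterministically and the law of $d$ is just $\mathcal{P}$. The paper proves this via a single lemma $d\left(\pose,\Exp_{\pose}(h\mathbf{v})\right)=h\|\mathbf{v}\|$ (citing do Carmo, Prop.~3.6) "in a sufficiently small interval respecting the cut-locus," plus an auxiliary lemma showing the normalized Gaussian direction is uniform on the sphere --- a fact that, as you correctly observe, is irrelevant to this proposition since the distance does not depend on $\mathbf{v}$ at all. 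Your version is more explicit in two places where the paper is silent, and both additions are genuinely valuable. First, you make the injectivity-radius argument quantitative: with the antipodal identification of \cref{dfn:riemann_dist} the per-joint distance is capped at $\pi/2$, and $h\|\mathbf{v}_i\|\le h\le 1<\pi/2$ guarantees each factor geodesic stays minimizing, which explains the otherwise unmotivated restriction of $\mathcal{P}$ to $[0,1]$. Second, by decomposing through the power-manifold structure you surface a real subtlety the paper's proof skips: the distance assembled through the $L_p$ product metric is $h\,\bigl\|(\|\mathbf{v}_1\|,\dots,\|\mathbf{v}_K\|)\bigr\|_p$, so the conclusion $d=h$ requires the normalization in \cref{alg:sampledata_jan} to use the norm compatible with the product metric (here $p=1$ over the per-joint tangent norms), not simply the ambient Euclidean norm on $\R^{4K}$; if these differ, $d$ is $h$ scaled by a $\mathbf{v}$-dependent ratio and the stated distributional identity only holds approximately. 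Flagging that compatibility condition is the one substantive point on which your argument is more careful than the paper's own.
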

\begin{proof}[Sketch of the proof]
    The proof uses the distance preservation of logarithmic and exponential maps in the base. A full proof is given in the supplemental materials.
\end{proof}

Given the introduced framework, we can induce a distribution $\mathcal{P}$, e.g. half-Gaussian, or exponential, as shown in Fig~\ref{fig:distance_distr}. Note that~\cref{prop:distance_pre} holds for distances to the seed example $\mathbf{p}$, while we show the distribution of distances to the closest neighbor in $\mathcal{D}$ (c.f.~\cref{eq:training}). Thus, we observe slight distribution shifts to the left.

\paragraph{Sampling diverse poses} 
To generate diverse pose samples on the manifold, we adopt an iterative procedure. We use~\cref{alg:sampledata_jan} to produce an initial sample and then project it onto the zero level set via the proposed RDFGrad. 

\paragraph{\newmodel~and Riemannian Flow Matching (RFM)~\cite{chen2023riemannian}}
RFM is a \emph{simulation-free} method for learning continuous normalizing flows (CNFs)~\cite{chen2018neural} on Riemannian manifolds, finding the \emph{optimal transport} (OT) between a simple distribution and the data distribution. Interestingly, we can make a strong connection between our framework and flow matching. While flow matching predicts steps along OT trajectories towards the data manifold via feed-forward prediction, we find these steps as a gradient of our distance field via backpropagation. Our data generation procedure additionally ensures that (1) the $t\in[0,1]$ of flow matching is a scaled variant of distance (by normalization of $\mathbf{v}$ in \cref{alg:sampledata_jan}), and (2) we recompute nearest neighbors from $\mathcal{D}$ after sample generation. We provide a formal connection in the supplementals. In general, our distance field formulation has some advantages: instead of predicting the step towards the manifold via an autoencoder, we obtain it via backpropagation. This allows optimization in domains in which designing decoders is challenging. Also, we can utilize existing optimizers of deep learning frameworks for Lagrangian iterations by simply minimizing distance.

\vspace{-2mm}
\section{Experiments and Results}
\vspace{-1mm}
In this section, we evaluate the performance of \newmodel{} on a range of downstream tasks and provide a comparison with baselines and prior work. \newmodel{} incorporates three key components: an innovative sampling method for training data generation (Alg.\ref{alg:sampledata_jan}), Riemannian distance ($\newdist{}$ from Def~\ref{dfn:riemann_dist}), and a novel projection using \RDFGrad{}. Ablation studies for each component are detailed in~\secref{exp:ablations}, alongside a comparison with Pose-NDF~\cite{tiwari22posendf}. We also compare our model to score-based model~\cite{ci2022gfpose} and RFM~\cite{chen2023riemannian}-based work, emphasizing the strong mathematical connection between the latter and \newmodel. We demonstrate the application of \newmodel{} as a prior in downstream tasks, such as pose generation~(\secref{exp:generation}), solving IK from sparse/partial observations~(\secref{exp:iksolver}), and for human pose estimation from images~(\secref{exp:hpsimages}). We compare against previous pose priors such as VPoser~\cite{SMPL-X:2019}, Pose-NDF~\cite{tiwari22posendf}, GFPose~\cite{ci2022gfpose}, GAN-S~\cite{Davydov_2022_CVPR} and our own baselines. Since \newmodel{} can be easily extended to any articulated shape, we show results on hand and animal poses in~\secref{exp:animalhand}.

\begin{figure}[t]
\includegraphics[width=\linewidth]{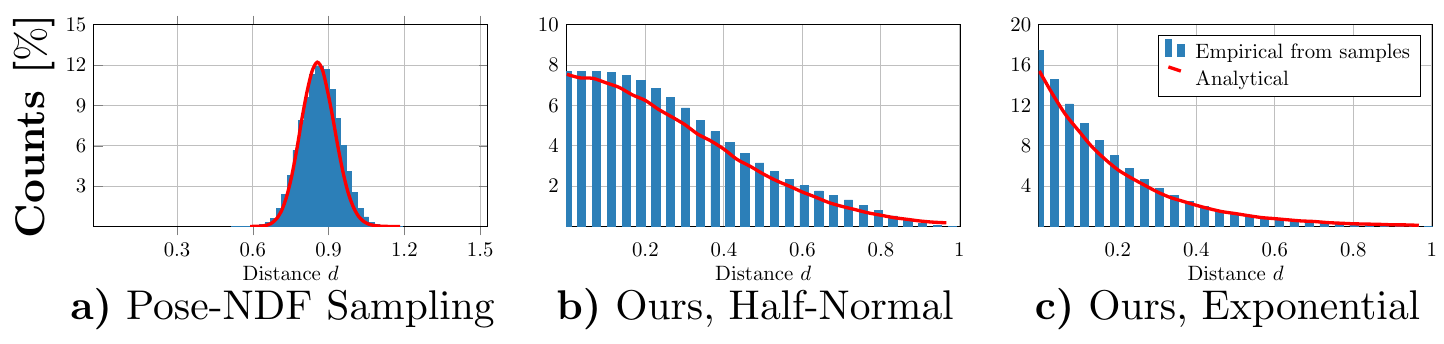}\vspace{-3mm}
\caption{\textbf{Distance distributions and histograms from different sampling strategies}. \textbf{a)} Pose-NDF sampling generates a $\mathcal{X}$-like distribution for large $k$, which does not fit the needs of distance field learning. Our sampling schedule allows to control the distance distribution, e.g. to follow \textbf{b)} Half-Gaussian or \textbf{c)} Exponential distributions. The histograms show the distance to the closest example in $\mathcal{D}$, not the distance to the original example, resulting in a slight distribution shift to the left due to neighbor changes with increasing distance.}
\label{fig:distance_distr}
\end{figure}

\paragraph{Evaluation metric}
Previous research~\cite{posemetric_GPD,posemetric_localfast} highlighted a significant gap in standard distance metrics based on joint locations, orientations, and user perception. We conducted a user study and identified that the most user-perceptive metric combines quaternion in the global frame and Euclidean marker-to-marker distance ($\Delta\q$ + m2m), followed by only marker-to-marker distance (m2m). These metrics are used for evaluation in our experiments, with user study details in the supplemental materials. To assess pose generation diversity, we utilize Average Pairwise Distance (APD)~\cite{stoch_diverse}, and for realism evaluation, we employ Fréchet distance (\fid{}) and a distance metric $\distnn = \min_{\pose'\in\mathcal{D}} \newdist{}(\pose, \pose')$. \fid{} gauges the similarity between the distributions of real and generated samples, while $\distnn$ measures the distance between the generated pose and its nearest neighbor from the training data.

\paragraph{Baselines} 
Here, we present baselines inspired by flow matching and diffusion models. Specifically, we adopt a score-based human pose model, GFPose~\cite{ci2022gfpose}. However, there are two key differences in the experimental setup: 1) GFPose is modeled using joint-location representation, and 2) the original model is trained on H3.6M~\cite{h36m_pami} dataset. We implement two baselines based on GFPose: \textbf{GFPose-A} and \textbf{GFPose-Q}. GFPose-A, is trained using the AMASS dataset, in joint-location representation, and GFPose-Q, is trained using quaternion representations. We also introduce two RFM~\cite{chen2023riemannian}-based baselines, namely \textbf{FM-Dis} and \textbf{FM-Grad}. FM-Grad represents the original RFM model trained for pose-denoising tasks with time conditioning, while FM-Dis closely aligns with our approach, where we predict the distance (without $t$-conditioning) and obtain the vector field toward the manifold through backpropagation in contrast to the direct prediction of the vector field in FM-Grad. 
Models based on distance fields utilize gradients calculated through backpropagation to approach the manifold. Additionally, we implement a Gradient Prediction network, which directly predicts this gradient, and unlike diffusion, this approach is not time-conditioned. We provide more details in the supplementary. 
We apply the aforementioned baselines for pose denoising and generation. However,  only FM-Dis is employed for optimization tasks, as others are formulated using the  $t$-conditioned model, rendering them unsuitable for integration into the optimization pipeline.

\begin{table}[t]
\centering
\resizebox{\columnwidth}{!}{
\begin{tabular}{lcccccc}
\toprule
Method &  Avg. Conv. Step$\downarrow$ & $\Delta\q$ +m2m $\downarrow$ & m2m (cm) $\downarrow$  \\
\midrule
Pose-NDF~\cite{tiwari22posendf}  &   40 & 0.349 & 25.04  \\
GFPose-Q & $\backslash$ & 0.359 & 24.43 \\
Gradient Prediction w/o time  & 68 &  0.401 & 37.26  \\
FM-Dis (w/o time) & 33 & 0.230 & 18.74 \\
FM-Grad (w/ time) & 52 & 0.216 & 16.72 \\
\midrule
\textbf{Ours} ($\alpha$=1.0)  &   \textbf{8} & \textbf{0.170} & \textbf{14.32}  \\
Ours ($\alpha$=0.5)  &   34 & 0.180 & 15.05  \\
Ours ($\alpha$=0.01)  &   34 & 0.201 & 16.59  \\
\midrule
Ours w/o \RDFGrad{}  &  48 & 0.171 & 14.51   \\
Ours w/o (\RDFGrad{}, $\newdist{}$)  &   100 & 0.179 & 15.15  \\
\bottomrule
\end{tabular}
}\vspace{-3mm}
\caption{\label{tab:ablation} \textbf{Comparison with baselines and model ablations on pose denoising:} We evaluate ($\Delta\q$ +m2m) and m2m between denoised poses and their ground truth nearest neighbors. Our method achieves the best accuracy while converging faster, thanks to the novel training data generation, $\newdist{}$ and \RDFGrad. \vspace{-1mm}}
\end{table}
\vspace{-1mm}
\subsection{Comparison with Baselines and Ablation Study}
\vspace{-2mm}
\label{exp:ablations}
We first compare our model with prior works on the task of pose denoising in~\tabref{tab:ablation} (top) and evaluate using ($\Delta\q$ +m2m) and m2m. In contrast to the prior distance field-based model Pose-NDF~\cite{tiwari22posendf}, \newmodel{} exhibits significantly lower error in the pose denoising task. We attribute this improvement to three key components of our model: training data generation, a Riemannian distance metric $\newdist{}$, and \RDFGrad{}-based projection. Specifically, unlike Pose-NDF, our training involves the sampling of more poses near the manifold, gradually decreasing as we move away from it. This method results in well-behaved training data, contributing to a continuous and more accurate learned manifold. Furthermore, the new \RDFGrad{}-based projection ensures that the projection adheres to the manifold of poses, eliminating the need for re-projection as seen in Pose-NDF. Consequently, the convergence is faster, as indicated in~\tabref{tab:ablation}. We note that GFPose-Q exhibits high error, primarily because denoising with GFPose-Q collapses to a mean pose. This suggests that training a diffusion/score-based model on quaternion representation poses challenges.
Similar behavior is observed in FM-Dis, which tends to generate more common poses. The Gradient Prediction Network yields very high error, emphasizing the difficulty of training a gradient without time-conditioning. FM-Grad, which predicts flow with time conditioning, performs slightly better than the Gradient Prediction Network but still lags behind \newmodel{} in terms of performance.

\paragraph{Model ablation} We now perform ablation on each component of our model, including training data generation, Riemannian distance metric $\newdist{}$, and \RDFGrad{}-based projection. show results in~\tabref{tab:ablation}(bottom). \emph{Ours w/o \RDFGrad{} }, yields similar error rates but exhibits slower convergence speed, indicating that \RDFGrad{}-based projection adheres to the manifold and facilitates faster convergence. \emph{Ours w/o} (\RDFGrad{},  $\newdist{}$) results in decreased accuracy, emphasizing that the new distance metric contributes to more accurate predictions. Finally, Pose-NDF is \emph{Ours w/o} (\RDFGrad{},  $\newdist{}$) and w/o new training data, and we observe that the performance degrades significantly.
\vspace{-2mm}

\begin{figure*}[t]
    \centering
    \includegraphics[width=1.0\textwidth]{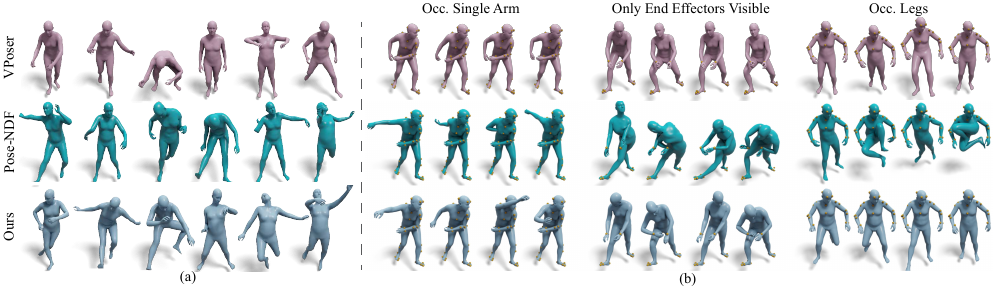}
    \vspace{-7mm}
    \caption{\textbf{(a) Pose generation:} \textbf{VPoser} generates realistic but somewhat limited diverse poses. \textbf{Pose-NDF} generates highly diverse poses but tends to yield unrealistic results (e.g., the third pose). \textbf{\newmodel{}} demonstrates a balance between diverse and realistic poses. \textbf{(b) IK Solver from partial/sparse markers:} Given partial observation (yellow markers), we perform 3D pose completion. We observe that VPoser~\cite{SMPL-X:2019} based optimization generates realistic, yet fixed and less diverse poses. Pose-NDF~\cite{tiwari22posendf} generates more diverse, but sometimes unrealistic poses, especially in case of very sparse observations. \newmodel{} generates diverse and realistic poses in all setups.\vspace{-6mm}}
	\label{fig:partialIK}
\end{figure*}
\subsection{Diverse Pose Generation}\vspace{-1mm}
\label{exp:generation}
We compare \newmodel{} for pose generation with classical GMM~\cite{bogo2016keep,omran2018neural}, VPoser~\cite{SMPL-X:2019}, GAN-S~\cite{Davydov_2022_CVPR}, Pose-NDF~\cite{tiwari22posendf}, diffusion-based pose priors (GFPose-A, GFPose-Q), and RFM~\cite{chen2023riemannian}-based model (FM-Dis), presenting the results in~\tabref{tab:posegen}. Evaluating realism, VPoser shows the lowest \fid{} and $\distnn$, indicating similarity to training samples, while Pose-NDF exhibits high \fid{} and $\distnn{}$, suggesting more divergence from training samples but resulting in unrealistic poses. Meanwhile, \fid{} and $\distnn{}$ for \newmodel{} are higher than VPoser and GAN-S, but lower than Pose-NDF. \newmodel{} strikes a balance, producing diverse yet realistic poses. 

Assessing pose diversity using APD reveals Pose-NDF with the highest values and VPoser with significantly lower scores. \newmodel{} shows substantial APD, indicating more diversity than VPoser but less than Pose-NDF. However, given the large \fid{} values, Pose-NDF tends to produce unrealistic poses, evident in~\figref{fig:partialIK} (a). VPoser, while less diverse, maintains realism. \newmodel{} proves to be a good tradeoff between diversity and realism, both visually and numerically. For the remaining priors, we note that they exhibit much lower APD, primarily generating mean poses.

\begin{table}[t]
\centering
 \small
\resizebox{\columnwidth}{!}{
\begin{tabular}{lcccccc}
\toprule
Method & \fid{} $\downarrow$  &  APD$\uparrow$\ (in cm) & $\distnn$ $\downarrow$ (in rad) \\

\midrule
GMM~\cite{bogo2016keep}  & $0.435^{\pm.017}$ & $21.944^{\pm.102}$ & $0.159^{\pm.001}$ \\
VPoser~\cite{SMPL-X:2019} &   $0.048^{\pm.002}$ & $14.684^{\pm.138}$ & $0.074^{\pm.000}$ \\
GAN-S~\cite{Davydov_2022_CVPR} &  $0.201^{\pm.030}$ & $10.914^{\pm.396}$ & $0.098^{\pm.001}$ \\
Pose-NDF~\cite{tiwari22posendf} &   $3.920^{\pm.034}$ & $37.813^{\pm.085}$ & $0.838^{\pm.001}$ \\
GFPose-A &   $1.246^{\pm.005}$ & $13.876^{\pm.116}$ & $\backslash$ \\
GFPose-Q &   $1.624^{\pm.002}$ & $6.773^{\pm.112}$  & $0.159^{\pm.000}$ \\
FM-Dis  &   $0.346^{\pm.007}$ & $6.849^{\pm.199}$ & $0.086^{\pm.001}$ \\
Ours ($\alpha$=0.01) &   $0.636^{\pm.007}$ & $23.116^{\pm.105}$  & $0.177^{\pm.001}$ \\
\bottomrule
\end{tabular}
}
\vspace{-3mm}
\caption{\label{tab:posegen} \textbf{Pose generation}. We sample $20 \times 500$ poses. $\pm$ indicates the 95$\%$ confidence interval in sampling  $20 \times 500$ poses.\vspace{-1mm}}
\end{table} 

\subsection{Optimization-based Downstream Tasks}\vspace{-1mm}
\newmodel{} can be used as a pose prior term in optimization-based downstream tasks~(\secref{exp:iksolver}) such as pose completion from partial observation or IK solver and pose estimation from images. For each task, we aim to find optimal SMPL parameters ($\pose, \shape$) that explain the observation. The optimization objective is formulated as~\eqnref{eq:opt_generic}, where $\mathcal{L}_{\mathrm{data}}$ is the task-dependent data term, $\mathcal{L}_{\pose}$ is the pose prior term, $\mathcal{L}_{\shape}$ is the shape prior term and $\mathcal{L}_{\alpha}$ represents any other regularizer, if needed. In our experiments we use $\mathcal{L}_{\shape} = || \shape ||^2$~\cite{SMPL-X:2019}. For pose prior terms, VPoser uses $\mathcal{L}_{\theta} = ||z||_2^2$, where $z\,$ is the latent vector of the VAE. 
 Pose-NDF~\cite{tiwari22posendf}, FM-Dis and \newmodel{} use $\mathcal{L}_{\theta} =f(\pose)$, where $f(\pose)$ is the distance value predicted from network. Details of the data and regularizer terms for each task are provided in respective sections. 
\vspace{-1mm}
\begin{equation}
\label{eq:opt_generic}
\hat{\shape}, \hat{\pose}  =  \argmin_{\shape, \pose} \; \;  \mathcal{L}_{\mathrm{data}} + \lambda_{\pose}\mathcal{L}_{\pose} + \lambda_{\shape}\mathcal{L}_{\shape} + \lambda_{\alpha}\mathcal{L}_{\alpha} \textrm{,}
\end{equation}
\vspace{-1mm}
\paragraph{IK solver from partial observations}
\label{exp:iksolver}
Pose acquisition from dense sensors is expensive and tedious, while partial/sparse observation is underconstrained. This underscores the need for a fast and user-friendly Inverse Kinematics (IK) Solver for generating diverse and realistic complete poses from partial observations. To address this, we devise an experimental setup for 3D pose completion from partial observations. Our optimization process, based on \eqnref{eq:opt_generic}, incorporates the $\mathcal{L}_{\mathrm{data}}$ term defined by \eqnref{eq:iksovler}, where $\smpl(\cdot)$ maps ($\beta, \pose$) to SMPL mesh vertices, $\mathcal{J}$ maps the vertices to observed markers/joints, and  $\Vec{J}^{\mathrm{obs}} \in \mathbb{R}^{|\mathrm{joints}| \times 3}$ represents partial marker or joint observations.
\vspace{-1mm}
\begin{equation}
\label{eq:iksovler}
\mathcal{L}_{\mathrm{data}} =  \sum_{i \in \mathrm{joints}} ||( \mathcal{J}(\smpl( \beta, \pose))_{j} - \Vec{J}^{\mathrm{obs}}_j) ||_2
\end{equation}

\begin{table*}[th]
	\begin{center}
		\centering
		
 		\resizebox{1.0\textwidth}{!}{
		\begin{tabular}{l|ccc|ccc|ccc}
		    \toprule
		     \multirow{2}{*}{Method} & \multicolumn{3}{c|}{Occ. Single Arm}& \multicolumn{3}{c|}{Only End Effectors Visible} & \multicolumn{3}{c}{Occ. Legs} \\
			 & \fid{} $\downarrow$ & APD (in cm) $\uparrow$\  & $\distnn$ (in rad) $\downarrow$\ & \fid{} $\downarrow$ & APD (in cm) $\uparrow$\   & $\distnn$ (in rad) $\downarrow$\  & \fid{} $\downarrow$ & APD (in cm) $\uparrow$\  & $\distnn$ (in rad) $\downarrow$\  \\
			\hline
            VPoser-Random  &  $1.148^{\pm.264}$ & $3.218^{\pm.553}$ &  $0.069^{\pm.000}$ & $0.769^{\pm.095}$   & $6.706^{\pm.625}$   & $0.068^{\pm.000}$ &  $0.650^{\pm.150}$ & $9.399^{\pm1.368}$  &  $0.060^{\pm.004}$ \\   
			Pose-NDF~\cite{tiwari22posendf} & $1.281^{\pm.258}$  & $15.294^{\pm1.927}$ & $0.443^{\pm.001}$   &  $1.964^{\pm.125}$ &  $30.871^{\pm1.202}$  & $0.643^{\pm.001}$  & $3.043^{\pm.427}$  &  $30.291^{\pm1.987}$  &  $0.548^{\pm.001}$ \\
			FM-Dis & $1.341^{\pm.246}$  &  $4.490^{\pm1.293}$ & $0.154^{\pm.001}$   &  $1.472^{\pm.252}$ &  $9.554^{\pm2.977}$  & $0.153^{\pm.001}$ & $1.030^{\pm.221}$ &  $7.950^{\pm2.773}$  &  $0.155^{\pm.001}$\\
			  \textbf{Ours}  & $1.248^{\pm.341}$  &  $6.094^{\pm.003}$  & $0.137^{\pm.000}$   & $1.006^{\pm.144}$ & $9.787^{\pm.040}$ & $0.143^{\pm.000}$ & $0.887^{\pm.170}$  &  $8.264^{\pm.007}$  & $0.130^{\pm.000}$ \\   
			\bottomrule
            \end{tabular}
            }\vspace{-3mm}
        \caption{\textbf{Quantitative results for IK Solver from with partial/sparse markers.} We run all evaluations 20 times, $\pm$ indicates the 95$\%$ confidence interval. We evaluate under 3 settings: \textbf{Occ. Single Arm},   \textbf{Only End Effectors}  (wrists and ankles) \textbf{Visible} and \textbf{Occ. Legs}. Our method generates more diverse poses than VPoser~\cite{SMPL-X:2019} for invisible body parts, while preserving more realistic poses (smaller distance to the manifold) than Pose-NDF~\cite{tiwari22posendf} and FM-Dis.\vspace{-5mm}}
		\label{tab:iksolver}
	\end{center}
\end{table*}

We evaluate IK solver on three kinds of observations: 1) \textbf{Occluded Single Arm}, 2) \textbf{Only End Effectors Visible}, and 3) \textbf{Occluded Legs}. In~\figref{fig:partialIK} (b), we present qualitative results from our experiments, where multiple hypotheses are generated based on different initializations given a partial observation. Note that VPoser~\cite{SMPL-X:2019}'s default setting initializes the latent space with a zero vector without introducing noise. Thus we additionally fine-tune it with random initialization of latent space, denoting as \textbf{VPoser-Random}, where the initialization is sampled from a standard normal distribution $\mathcal{N}(0, \mathbf{I})$. Our findings show that VPoser-Random produces less diverse poses than distance field-based approaches due to the Gaussian assumption in VPoser's latent space. Additionally, Pose-NDF often generates unrealistic poses and is highly dependent on the initialization. In contrast, \newmodel{} generates realistic and diverse poses. 

Quantitative analysis in~\tabref{tab:iksolver} includes evaluating pose diversity and realism using \fid{}, APD, and $\distnn$ metrics. Notably, VPoser tends to generate more common poses with slightly better \fid{} and $\distnn$ scores due to its generation of almost mean poses, consistently appearing realistic. Pose-NDF exhibits the highest APD, \fid{}, and $\distnn$ scores, indicating the generation of diverse yet unrealistic poses. These observations are consistent with the results shown in~\figref{fig:partialIK} (b). Our baseline, FM-Dis, performs poorly in terms of diversity and realism. Additional results on more experimental setups are provided in the supplementals.

\paragraph{Monocular 3D pose estimation from images}
\label{exp:hpsimages}
3D pose estimation with neural networks is characterized by speed and robustness; however, it tends to lack accuracy due to the absence of feedback between the observation and prediction. To improve predictions, we propose refining the predictions through an optimization pipeline based on~\eqnref{eq:opt_generic}. We use the \sota{} pose-estimation model, SMPLer-X~\cite{cai2023smpler} for predictions and evaluate the optimization pipeline on the 3DPW~\cite{vonMarcard2018} dataset. The data term in~\eqnref{eq:opt_generic} is: 
\vspace{-1mm}
\begin{equation}
\label{eq:image}
\mathcal{L}_{\mathrm{data}} = \sum_{i \in \mathrm{joints}} \gamma_i w_i \rho(\Pi_K (R_{\theta}(J(\shape))) - \hat{J}_i)
\vspace{-1pt}
\end{equation}
where $\hat{J}_i$ are GT(or predicted) 2D-keypoints, $R_{\theta}$ transforms the joints along the kinematic tree according to the pose $\pose$, $\Pi_K$ is 3D-2D projection with intrinsic camera parameters, $\rho$ is a robust Geman-McClure error~\cite{geman1986bayesian}, $w_i$ are conf. factor of 2d keypoint prediction and $\gamma_i$ is joint weight. 

In~\tabref{tab:image}, we compare \newmodel{}-based optimization with other pose prior such as VPoser, Pose-NDF, FM-Dis, and also with NoPrior term. For quantitative evaluation, we measure PA-MPJPE, PA-PVE, and PCK@50mm~\cite{cai2023smpler,vonMarcard2018} on 3DPW dataset. Our experiments demonstrate that the optimization method based on \newmodel{} consistently outperforms other pose priors. This highlights that the \newmodel{}-based pose manifold is more detailed, leading to improved accuracy while preserving the realism of poses. We show qualitative results of SMPLer-X prediction and refined results using \newmodel{} based optimization in~\figref{fig:image_more_results} and provide more results in the supplementary material.

\begin{figure}[t]
\vspace{-3mm}
\centering
    \begin{overpic}[trim=0cm 0cm 0cm 0cm,clip, width=0.98\linewidth]{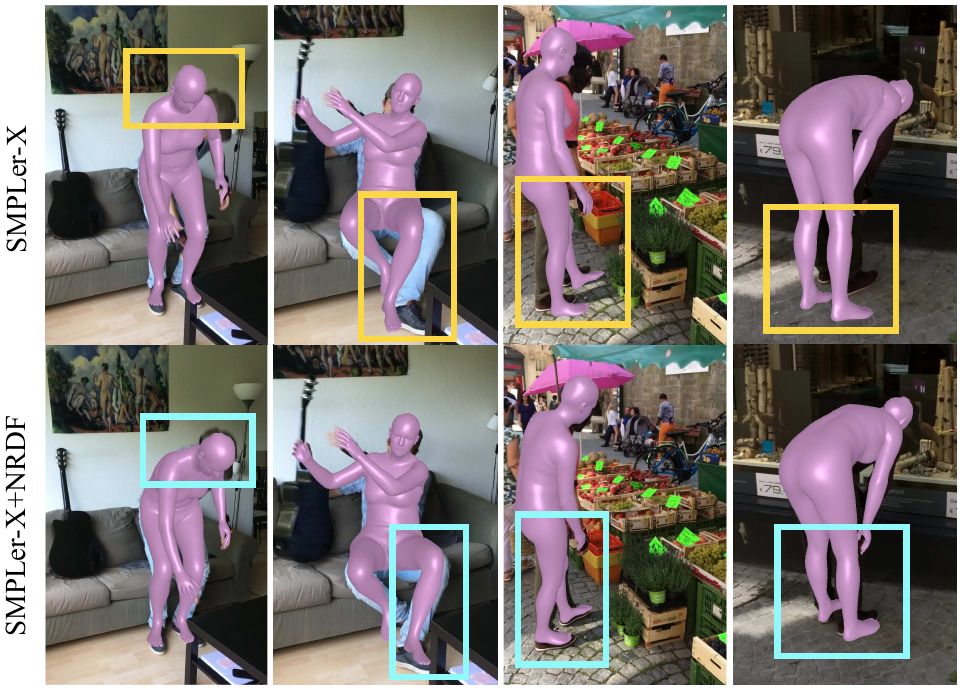}
    \end{overpic}
    \vspace{-3mm}
	\caption{ \textbf{3D pose and shape estimation from images:}(Top): Results from SMPLer-X~\cite{cai2023smpler}, (Bottom): We refine the network prediction using \newmodel{} based optimization pipeline. As highlighted, refined poses align better with the observation.\vspace{-2mm}}
	\label{fig:image_more_results}
 \vspace{-0.5pt}
\end{figure}

\begin{table}[t]
\centering

\resizebox{\columnwidth}{!}{
\begin{tabular}{lccccccc}
\toprule
Method &  PA-MPJPE$\downarrow$\ (in mm) & PA-PVE$\downarrow$\ (in mm)  & PCK@50$\uparrow$ \\

\midrule
SMPLer-X~\cite{cai2023smpler} & 77.82 &  62.77  & 59.30 \\
+ No prior  &  77.97 &  63.47  & 60.25 \\
+ VPoser~\cite{SMPL-X:2019}  &   73.44 & 59.87  & 60.49 \\
+ Pose-NDF~\cite{tiwari22posendf}  &  70.19   & 55.30  & 62.06 \\
+ FM-Dis  &  71.53 & 56.82   &  61.33 \\
+ Ours  &   \textbf{68.47} & \textbf{53.29}  & \textbf{66.17}\\
\bottomrule
\end{tabular}
}
\vspace{-3mm}
\caption{\label{tab:image} \textbf{3D pose and shape estimation from images:} We take SMPLer-X~\cite{cai2023smpler} predictions and refine them using optimization pipeline. We compare the performance of different pose priors.\vspace{-1mm}}
\end{table}

\vspace{-1mm}
\subsection{Extending \newmodel \ to Other Articulated Bodies}
\label{exp:animalhand}
\vspace{-1mm}
As \newmodel{} formulation is not limited to human poses, we use the same formulation to model manifolds of plausible hand and animal poses. 
For hands, we use MANO representation~\cite{MANO:SIGGRAPHASIA:2017} and the DART dataset~\cite{gao2022dart}, covering 80K poses of the right hand, to learn a right-hand pose prior.
For animals, we use SMALR~\cite{Zuffi_2017_CVPR, zuffi2018lions} representation and utilize the Animal3D dataset~\cite{Xu_2023_ICCV}, covering 3K animal poses, to learn articulated animal pose priors. Specifically, we train two different priors for dogs and horses. We show diverse pose generation results in and~\figref{fig:teaser} (right) and supplementals.
\vspace{-0.5cm}
\section{Conclusion and Discussion}
\vspace{-0.2cm}
We presented Neural Riemannian Distance Fields (\newmodel{}), which are data-driven priors that model the space of plausible articulations. The pose prior is represented as the zero-level-set of a neural field in a high-dimensional product-quaternion space. Our model is trained to predict distance geodesic distance on the Riemannian pose manifold. We introduce crucial technical insights to effectively learn a well-behaved and detailed pose manifold. 1) We introduce a sampling framework on Riemannian manifold, that follows the desired distribution, 2) A Riemannian distance metric and 3) We develop a theoretically sound adaptive-step Riemannian gradient descent algorithm that accelerates the convergence in mapping poses onto the learned manifold. Furthermore, we establish connections with Riemannian flow matching~\cite{lipman2022flow} and introduce baselines based on RFM to demonstrate the advantages of \newmodel{}. Our model demonstrates effectiveness in various applications, including pose generation, optimization-based Inverse Kinematics (IK) solving, and 3D pose estimation from images.  We also show the versatility of our formulation by extending it to learning pose priors for hands and animals.

\paragraph{Limitations and future work} Since our approach is based on an iterative sampling scheme, 
it may slightly reduce efficiency for pose generation compared to directly mapping a random latent code to a pose. 
Rather than merely sampling the initial point, we could also \emph{inject noise} during the projection, transforming it into a sequential geodesic MCMC sampler. This would be effective at generating a variety of random poses similar to a given initial pose. 
We could also model uncertainty over the manifold by describing it as a distribution over a family of implicit surfaces.
We leave these promising avenues for future research. 

{\small
\paragraph{Acknowledgments:} 
We thank RVH members, and reviewers for their feedback.
The project was made possible by funding from the Carl Zeiss Foundation. 
This work is supported by the Deutsche Forschungsgemeinschaft (DFG, German Research Foundation) - 409792180 (Emmy Noether Programme, project:  Real Virtual Humans) and the German Federal Ministry of Education and Research (BMBF): Tübingen AI Center, FKZ: 01IS18039A. 
Gerard Pons-Moll is a member of the Machine Learning Cluster of Excellence, EXC number 2064/1 - Project number 390727645. This work was supported by the Engineering and Physical Sciences Research Council [grant number EP/X011364/1].
}

{
    \small
    \bibliographystyle{ieeenat_fullname}
    \bibliography{main}
}

\clearpage
\setcounter{page}{1}
\setcounter{prop}{0}
\setcounter{section}{0}
\renewcommand\thesection{\Alph{section}}
\maketitlesupplementary

In the following, we start with proving the propositions in Sec.~\ref{sec:proofs} and discuss the relationship to Riemannian Flow Matching in Sec.~\ref{sec:rfm_connection}. Then, we present our user study about perceptual pose metrics in Sec.~\ref{sec:user_study}, followed by implementation details in Sec.~\ref{sec:more_implementation_details} and additional results in Sec.~\ref{sec:more_results}.

\section{Proofs \& Theoretical Discussions}
\label{sec:proofs}
\subsection{Proof of Prop. 1}
Before proceeding with the proof let us recall the main proposition. 
\begin{prop}[Quaternion-egrad2rgrad]
For the quaternion manifold, the projection and mapping onto the tangent space of the canonical unit quaternion $\eb=\begin{bmatrix} 1 & 0 & 0 & 0 \end{bmatrix}^\top$ (egrad2rgrad in~\cref{eq:gradf}) takes the form:
\begin{align}
    \Pi_{\q}(\vb) &= \Pm\vb - \frac{\eb^\top \Pm\vb}{1+\q^\top \eb} (\q+\eb)\\
    &= \begin{bmatrix}
        0 &  0 & 0 & 0\\
        -q_2/(1+q_1) &  1 & 0 & 0\\
        -q_3/(1+q_1) &  0 & 1 & 0\\
        -q_4/(1+q_1) &  0 & 0 & 1
    \end{bmatrix}\Pm\vb \textnormal{,}
\end{align}
where $\vb\in\R^4$ and $\Pm\vcentcolon=\Pm(\q)=\Id-\q\q^\top$.
\end{prop}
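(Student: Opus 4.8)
The plan is to treat the unit quaternions as the round sphere $\mathcal{S}^3\subset\R^4$ (indeed $\Quats\cong\mathcal{S}^3$ as a Riemannian submanifold) and to recognize that the \emph{egrad2rgrad} operator here is the composition of \textbf{two} geometric maps, not one: (i) the orthogonal projection of the ambient Euclidean gradient $\vb\in\R^4$ onto the tangent space $\TqM=\{\w\in\R^4:\q^\top\w=0\}$, and (ii) the parallel transport of the resulting tangent vector along the minimizing geodesic from $\q$ to the canonical unit quaternion (the group identity) $\eb$. Step (i) is exactly multiplication by $\Pm=\Id-\q\q^\top$, so the task reduces to writing the sphere's parallel transport in closed form and composing it with $\Pm$.

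For step (ii), I would invoke the standard fact that parallel transport on $\mathcal{S}^3$ from $\q$ to $\eb$ is realized by the rotation $\Rot_{\q\to\eb}$ acting within the $2$-plane $\mathrm{span}\{\q,\eb\}$, sending $\q\mapsto\eb$ and fixing the orthogonal complement. This rotation has the explicit form
\begin{equation}
\Rot_{\q\to\eb}=\Id-\frac{(\q+\eb)(\q+\eb)^\top}{1+\q^\top\eb}+2\,\eb\q^\top,
\end{equation}
which one checks directly by verifying $\Rot_{\q\to\eb}\,\q=\eb$ and that any $\w\perp\q,\eb$ is fixed. Applying $\Rot_{\q\to\eb}$ to a tangent vector $\w=\Pm\vb\in\TqM$ and using $\q^\top\w=0$ (so that $(\q+\eb)^\top\w=\eb^\top\w$ and the final term $2\,\eb\q^\top\w$ vanishes) collapses the expression to
\begin{equation}
\Pi_{\q}(\vb)=\Rot_{\q\to\eb}\,\Pm\vb=\Pm\vb-\frac{\eb^\top\Pm\vb}{1+\q^\top\eb}(\q+\eb),
\end{equation}
which is the first claimed identity.

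Finally, I would specialize to $\eb=\begin{bmatrix}1&0&0&0\end{bmatrix}^\top$. Then $\eb^\top\Pm\vb=(\Pm\vb)_1$, $\q^\top\eb=q_1$, and $\q+\eb=\begin{bmatrix}1+q_1&q_2&q_3&q_4\end{bmatrix}^\top$. Factoring the scalar $(\Pm\vb)_1/(1+q_1)$ and collecting the linear action on $\Pm\vb$ into a matrix recovers the $4\times4$ matrix in the statement: the first row cancels to zero since $(1+q_1)/(1+q_1)=1$, while each subsequent row $i$ contributes $-q_i/(1+q_1)$ in its first column and $1$ on the diagonal. This last part is a routine substitution.

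The main obstacle is conceptual rather than computational. One must correctly identify that the operator is \emph{not} merely the tangent-space projection $\Pm$ at $\q$, but additionally carries a parallel transport into the identity's tangent space $\T_\eb\Quats$, i.e.\ the pure-imaginary quaternions $\{\w:w_1=0\}$; this transport is precisely what forces the first component to vanish and makes the output a valid Lie-algebra element suitable for the manifold updates used by \RDFGrad{}. Establishing the closed form of $\Rot_{\q\to\eb}$ and its defining property $\Rot_{\q\to\eb}\,\q=\eb$ is the only place requiring care; everything downstream is elementary algebra.
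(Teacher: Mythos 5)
Your proposal is correct and follows essentially the same route as the paper's proof: first project the ambient vector onto $\T_{\q}\Quats$ via $\Pm=\Id-\q\q^\top$, then parallel-transport the result to $\T_{\eb}\Quats$ using the rotation in $\mathrm{span}\{\q,\eb\}$ that sends $\q$ to $\eb$. Your version is in fact slightly more explicit than the paper's, since you write out the full rotation $\Rot_{\q\to\eb}$ and show why the term $2\,\eb\q^\top\w$ drops for tangent vectors, whereas the paper directly quotes the vector-rotation formula restricted to the tangent plane.
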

\begin{proof}[Proof]
First, note that the quaternion $\q$ is also the normal vector at point $\q$. Hence, the projection of any ambient vector onto the tangent space of a quaternion can be obtained by the standard projection onto the plane defined by the normal. In other words, the orthogonal complement of the tangent plane is the line in the direction of $\q$. Hence, we can project any ambient vector $\vb\in\Quats$ onto $\TqM$ as:
\begin{align}
    (\vb-{\q\q^\top\vb})=(\Id-\q\q^\top) \vb = \Pm(\q) \vb.
\end{align}
Next, we need to rotate $\Pm(\q) \vb$ to align with the identity tangent space, $\TiM$ or as in the proposition, $\TeM$. To do so, we utilize the \emph{discrete connection} on $\mathcal{S}^3$, which can be obtained by rotation of tangent planes. Since the tangent planes at $\q$ and $\eb$ are defined by $\q$ and $\eb$ themselves, all we need to do is to find the linear map, \ie a rotation, that aligns $\q$ onto $\eb$. This is the typical vector-rotation formula in $4$-space and is given by: 
\begin{equation}
    \vb - \frac{\eb^\top \vb}{1+\q^\top \eb} (\q+\eb)
\end{equation}
Plugging in $\eb=\begin{bmatrix} 1 & 0 & 0 & 0 \end{bmatrix}^\top$ and re-arranging yields the matrix form given in the proposition. Note that this is unique up to rotation around $\eb$ and well connects to the non-uniqueness of parallel transport\footnote{The \emph{hairy ball theorem} states the nonexistence of a global parameterization of any continuously varying basis vector of $\TxS$ on all of $\mathcal{S}$.}. 
\end{proof}

\subsection{Proof of Prop. 2}
We re-state Prop. 2  of the main paper before delving into the proof.
\begin{prop}[Distance preservation]
Let $\mathcal{P}$ be a distribution over domain $[0,1]$, $\pose \in \mathcal{D}$ a data example, $\hat{\pose} \in \Quats^K$ the output of ~\cref{alg:sampledata_jan} with input $(\pose, \mathcal{P})$, and $d= d(\pose, \hat{\pose})$. Then, for the distribution of resulting distances holds $p(d) = \mathcal{P}$.
\end{prop}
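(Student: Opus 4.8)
The plan is to show that \cref{alg:sampledata_jan} makes the realized distance $d$ coincide \emph{deterministically} with the sampled scalar $h$, so that the law of $d$ is simply the pushforward of $\mathcal{P}$ under the identity map — namely $\mathcal{P}$ itself. The randomness in the direction $\mathbf{v}$ will turn out to be irrelevant to $d$, so it can be marginalized out at the end without changing anything.

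First I would invoke the \emph{radial isometry} of the exponential map (the ``distance preservation of $\Log$ and $\Exp$'' mentioned in the sketch): the geodesic $t\mapsto\Exp_{\pose}(t\mathbf{v})$ issuing from $\pose$ has constant speed $\|\mathbf{v}\|$, so its length on $[0,h]$, and hence $d\big(\pose,\Exp_{\pose}(h\mathbf{v})\big)$, equals $h\|\mathbf{v}\|$ — \emph{provided} $h\|\mathbf{v}\|$ stays within the injectivity radius, where $\Log_{\pose}$ inverts $\Exp_{\pose}$ and $d=\|\Log_{\pose}(\hat{\pose})\|$ (cf. Def.~\ref{dfn:riemann_dist}). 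For unit quaternions under antipodality the injectivity radius is $\pi/2$; since line~3 of the algorithm normalizes $\mathbf{v}$ to $\|\mathbf{v}\|=1$ and $h$ is drawn from a distribution supported on $[0,1]$, we have $h\|\mathbf{v}\|=h\le 1<\pi/2$. The radial isometry therefore applies and yields $d=h$, independent of the chosen direction $\mathbf{v}$.

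I would then finish with a trivial change of variables: since $h\sim\mathcal{P}$ and $d=h$ almost surely, for every measurable $A$ we get $\Pr[d\in A]=\Pr[h\in A]=\mathcal{P}(A)$, i.e.\ $p(d)=\mathcal{P}$. Because $d$ does not depend on $\mathbf{v}$, the independent sampling of $\mathbf{v}$ in lines~2--3 (Gaussian followed by normalization, hence uniform on the unit sphere of $\TposeQuats$) leaves this conclusion untouched after marginalizing over $\mathbf{v}$.

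The step requiring the most care is the passage to the \emph{product} manifold $\Quats^K$. By the natural isomorphism of the tangent bundle (Def.~\ref{dfn:humanpose}) the exponential map acts componentwise, so on each factor the single-quaternion argument gives $d(\q_i,\hat{\q}_i)=h\|\mathbf{v}_i\|$. Whether these componentwise contributions recombine to exactly $h$ depends on the metric used to aggregate them versus the norm used to normalize $\mathbf{v}$: with the geodesic (L2) product metric one obtains $\big(\sum_i (h\|\mathbf{v}_i\|)^2\big)^{1/2}=h\|\mathbf{v}\|=h$, which is precisely consistent with the unit-sphere normalization of $\mathbf{v}$ in $\TposeQuats$ performed in line~3. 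I would therefore state the radial-isometry argument at the level of the product manifold and verify that the normalization convention in the algorithm is exactly the one under which the componentwise isometries compose to $d=h$; I would also confirm that the injectivity-radius bound holds on every factor simultaneously, which it does since $h\|\mathbf{v}_i\|\le h\le 1<\pi/2$, so that $d$ is genuinely the geodesic distance rather than only a bound on it.
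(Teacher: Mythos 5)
Your proof is correct and rests on the same core fact as the paper's: the exponential map is a radial isometry at its base point, so $d(\pose,\Exp_{\pose}(h\vb))=h\|\vb\|=h$ after normalization, and the law of $d$ is just the law of $h$, i.e.\ $\mathcal{P}$. You go beyond the paper's proof in two useful ways. First, you make the ``sufficiently small interval'' caveat quantitative: with antipodal identification the per-component injectivity radius is $\pi/2$ and $h\|\vb_i\|\le h\le 1<\pi/2$, so the radial isometry genuinely applies rather than merely bounding the distance. Second, you flag the product-manifold aggregation issue: the component distances recombine to exactly $h$ under the $L_2$ (geodesic) product metric, which is the one consistent with the unit-sphere normalization of $\vb$ in $\T_{\pose}\Quats^K$, whereas the paper elsewhere declares $p=1$ for its product metric --- under the $L_1$ combination one would get $h\sum_i\|\vb_i\|\neq h$ in general, so your insistence on matching the norm conventions is a real correctness point the paper's proof glosses over. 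You also correctly observe that the uniformity of $\vb/\|\vb\|$ on the sphere (the paper's first lemma) is irrelevant to the distance distribution and can be dropped from this proof; it matters only for characterizing how the samples themselves are distributed.
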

\begin{proof}[Proof]
Our proof closely follows the steps in our algorithm where we sample a Gaussian vector of magnitude $h$, project it on the $3$-sphere (where the distribution is uniform) and take steps. In the sequel, we prove each step.
\begin{dfn}
    A vector $\vb\in\R^n$ is said to be {radially symmetric} if $\forall \Ab \in O(n), \vb\stackrel{d}{=}\Ab\vb$.
\end{dfn}
\begin{lemma}
Let $\vb\in\R^n$ be a \emph{radially symmetric} random vector. Then $\vb/\|\vb\|\sim \Unif(\mathcal{S}^{n-1})$.
\end{lemma}
\begin{proof}
Let $\Ab\in O(n)$ denote an orthogonal matrix and $\mathrm{Proj}$ be the projection onto $\mathcal{S}^{n-1}$. Then the following holds:
\begin{equation}
    \mathrm{Proj}(\vb)\vcentcolon=\frac{\vb}{\|\vb\|} \,\stackrel{d}{=} \, \frac{\Ab\vb}{\|\vb\|} = \mathrm{Proj}(\Ab\vb). 
\end{equation}
Thus, the random vector $\vb/\|\vb\|$ takes all its values over $\mathcal{S}^{n-1}$ and is radially symmetric. This is exactly the definition of $\Unif(\mathcal{S}^{n-1})$, the uniform distribution over $\mathcal{S}^{n-1}$. A more rigorous proof involves geometric measure theory, which is beyond the scope of this work. We refer the reader to~\cite{mattila1999geometry}  for further details.
\end{proof}
Showing that $\vb$ after normalization follows a uniform distribution on the unit sphere in $\TposeS$, we move to the second part, where $\|\pose- h\cdot \vb\| = h$ since  $\|\vb\|=1$ and $h\cdot \vb\in\TposeS$ for any $h>0$. In other words, $h$ moves $\vb$ towards/away from the base $\pose$.
\begin{lemma}
    Let $\vb\in\TposeQuats$. Using the ambient space of $\R^4$ and $\|\vb\|=1$, for all $t$:
    \begin{align}
        d\left(\pose, \Exp_{\pose} (h\cdot \vb) \right) = h\|\vb\|,
    \end{align}
in a sufficiently small interval (respecting the cut-locus), where  $d(\cdot,\cdot)$ denotes the geodesic distance. 
\end{lemma}
\begin{proof}
    The proof follows the do Carmo, ``Riemannian Geometry'', Proposition 3.6~\cite{do1992riemannian}.
\end{proof}
Since by this lemma, the exponential map on the sphere preserves distances to the base within $[0,1]$\footnote{Note that, a common mistake is to assume that all distances are preserved. This is not true and only the distances to the base of the Exp/Log map is preserved.}, it follows that $d({\pose}^\prime, \pose) = h$. With a similar argumentation, when $h\sim \mathcal{P}$, $d({\pose}^\prime, \pose)\sim\mathcal{P}$.
\end{proof}

\paragraph{How are the sample points distributed?}
While our algorithm guarantees that the distances attained as a result of sampling preserve the initial choice of distance distribution, the distribution of the samples themselves can undergo distortion, \ie, they can follow a complicated distribution on $\Quats^K$. Naturally, it is of interest to also understand how the samples are actually distributed on $\Quats^K$. While we cannot provide an explicit analytical form, we will provide an intuition into their distributions below, by following the exposition in~\cite{falorsi2019reparameterizing}.

Quaternions look like $SU(2)$, a Lie group isomorphic to $SO(3)$. Their Lie algebra (tangent space) $\mathfrak{su}(2)$ is isomorphic to the Lie algebra $\mathfrak{so(3)}$. Let us denote the Lie algebra of quaternions by $\lieq$. 
Being the direct product of (non-Abelian) Lie groups, $\Quats^K$ is also a Lie group, with a Lie algebra $\lieQ$.
The adjoint representation of $\x\in\lieQ$, $\adj_{\x}(\y)$, is the matrix representation of the map $[\x,\y]$ called the Lie bracket. For quaternions $\adj_{\x}$ is the linear representation of this map.  
In our sampling algorithm (\cref{alg:sampledata_jan}, to explicitly control the distance distribution, we first sample $\vb\sim\normal(0,\Id)$ in the Lie algebra of $\pose$. We then normalize it yielding a uniform distribution on the $3$-sphere and compose it with Gaussian sampled scalars $h$, resulting in a Gaussian distribution radially and uniform distribution in terms of direction: $\vb/\|\vb\|\sim\Unif(\mathcal{S}^3_h) \mathrm{\,and\,} \|\vb\|_2\sim\normal(0,1)$. We call this tangent distribution $r(\vb)$. Next, we use the exponential map to push $r$ wrapping the samples on the manifold resulting in our \emph{wrapped uniform-spherical distribution}:
    \begin{align}
        p(\pose) = \sum\limits_{\substack{\vb\in\lieQ \\ \Exp_{\bmu}(\vb)=\pose}} r(\vb) \left|\J^{-1}\right|,
    \end{align}
    where $\J$ is the Jacobian map, whose determinant is the change of volume:
    \begin{align}
\left|\J^{-1}\right|\vcentcolon=\mathrm{det}\left( 
 \sum_{k=0}^{\infty} \frac{(-1)^k}{(k+1)!}\adj_{\x}^k\right).
    \end{align}
Intuitively, $\J$ linearly relates the tangent spaces and as such can be computed, similar to~\cref{eq:egrad2rgrad}, via the parallel transport. As a result, we lose the simple form of the distribution in the tangent space and can only arrive at the final distribution via this push-forward operation.

\subsection{Riemannian Flow Matching (RFM) vs. NRDF}
\label{sec:rfm_connection}
Our training strategy resembles the extension of a recent state-of-the-art generative model, Flow Matching (FM)~\cite{lipman2022flow} onto Riemannian manifolds, known as, Riemannian Flow Matching (RFM)~\cite{chen2023riemannian}. The differences lie in the sampling of training data and time steps as well as the way the \emph{flow} is computed / predicted. In what follows, we will briefly summarize RFM and make the connection to our model NRDF. We will begin by recalling certain definitions.
\begin{dfn}[Riemannian CNF]
    A CNF $\varphi_t(\cdot):\Man\to\Man$ on the smooth manifold $\Man$ is defined by integration along a time-dependent vector field $v_t(\x)\in\TxM$: $\dot\varphi_t(\x)=v_t(\varphi_t(\x))$, parameterized by $t\in[0,1]$, where $\varphi_0(\x)=(\x)$.
\end{dfn}
\begin{dfn}[Probability path]
Let $\PM$ denote the space of probability distributions on $\Man$. A probability path $\rho_t:[0,1]\to\PM$ interpolates between two distributions $\rho_0,\rho_1\in\PM$ indexed by $t\in[0,1]$.
$\rho_t$ is said to be \textbf{generated} by $\varphi_t$ if it \emph{pushes forward} $\rho_0$ to $\rho_1$ following $v_t$, \ie $\rho_t=[\varphi_t]_{\#}(\rho_0)$.
\end{dfn}
\begin{dfn}[RFM]
Given a probability path $\rho_t$, subject to the boundary conditions $\rho_0=\rho_{\mathrm{data}}$ and $\rho_1=\rho_{\mathrm{prior}}$, as well as an associated flow $\varphi_t$, learning a CNF by directly regressing $v_t$ through a parametric, neural network $g_{\beta}$, is called Riemannian flow matching. 
\end{dfn}
\begin{dfn}[Riemannian Conditional FM]
    Unfortunately, the vanilla RFM objective is intractable as we do not have access to the closed-form $u_t$ generating $\rho_t$. Instead, we can regress $g_{\beta}$ against a tractable \emph{conditional vector field} $v_t(\x_t\mid\z)$, generating a \emph{conditional probability path} $\rho_t(\x_t\mid\z)$ which can recover the target unconditional path by marginalization:
    \begin{equation}
        v_t(\x) = \int_{\Man} v_t(\x\mid\z)\frac{\rho_t(\x\mid\z)q(\z)}{\rho_t(\x)}\diff \mathrm{vol}_{\z}.
    \end{equation}
    Chen \& Lipman then define the following Riemannian conditional FM (RCFM) objective for learning as:
    \begin{equation}
        \Loss_{RCFM}(\beta) = \E_{t, q(\z), \rho_t(\x_t\mid\z)} d ( g_{\beta}(t,\x_t), v_t(\x_t,\z))^2,
    \end{equation}
    whose gradient is the same as that of RFM. Here, $t~\in\Unif(0,1)$ and $d(\cdot,\cdot)$ is the geodesic distance.
\end{dfn}

A simple variant of RCFM makes a particular choice of time scheduling, linearly decreasing the geodesic distance between $\x_t$ and $\x_1$ arriving at:
\begin{equation}\label{eq:RFMloss2}
\E_{t, q(\x_1), p(\x_0)} \left\lVert v_t(\x_t, t) + d(\x_0, \x_1) \frac{\grad~d(\x_t, \x_1)}{\norm{\grad~d(\x_t, \x_1)}^2_g}\right\rVert ^2_g
\end{equation}
This form will closely relate to our work as we clarify below. 

\paragraph{NRDF \& RFM}
In our work, we consider RFM on the product manifold of quaternions where $\Man\equiv\Quats^K, \x\equiv\pose$, and use the associated operators. We present in~\cref{alg:RFM} the Riemannian Flow Matching adapted to our problem,  articulated pose estimation. Note that, there are two fundamental differences: (i) the sampling of RFM and our sampling in~\cref{alg:sampledata_jan}, and (ii) the way the gradients are obtained. We now have a closer look into this.
\begin{algorithm}[t]
\caption{Training Riemannian Flow Matching for Learning on Articulated Bodies}
    \begin{algorithmic}[1]
    \Require Base distribution $p(\pose_0)$, target $q(\pose_1)$, initial parameters $\phi_0$ of a network $g_{\phi}$
    \Ensure Trained weights $\phi$
    \While {(not converged)}
        \State Sample $t\sim\Unif(0,1)$
        \State Sample training pose $\pose_1\sim q(\pose_1)$
        \State Sample noisy pose $\pose_0\sim p(\pose_0)$
        \State $\pose_t = \Exp_{\pose_0} ( t \cdot \Log_{\pose_0}(\pose_1))$ (\cf \cref{eq:RFMUpdate})
        \State Update $\phi_t$ by minimizing  $\loss_{\mathrm{RCFM}}$ in \cref{eq:RFMloss2}
    \EndWhile
    \end{algorithmic}
    \label{alg:RFM} 
\end{algorithm}

\paragraph{Sampling} As seen in~\cref{alg:RFM}, RFM samples time uniformly, $t \sim \mathcal{U}(0,1)$, and a target pose is obtained as:
\begin{equation}\label{eq:RFMUpdate}
\pose_t = \Exp_{\pose_0} ( t \cdot \Log_{\pose_0}(\pose_1)) \text{,}
\end{equation}
where $\pose_1 \sim q(\pose_1)$ is the data distribution and $\pose_0 \sim p(\pose_0)$ is the noise. Instead, our sampling algorithm presented in the main paper obtains a sample that is $h$ away from a training pose as: 
\begin{equation}
\hat{\pose} \leftarrow\text{Exp}_{\pose}(h \cdot \vb)  \text{,}
\end{equation}
where $\vb$ is directly sampled in the tangent space and normalized, and $h \sim \mathcal{P}$ for arbitrary $\mathcal{P}$ over $\mathbb{R}^+$. In contrast to RFM, we fix $d(\pose,\vb)=1$ through normalization and thus, $d(\hat{\pose}, \pose) = h$ (explicit control over distance).
Also note the time dependence. As our loss does not compute an explicit expectation over time, we can pre-compute all our training variables (nearest neighbors and the distances) offline. Having nearest neighbors as target distribution is unique to our work. 
As shown in~\cref{alg:NRDF}, this greatly simplifies the training loop leading to stable and fast training.
Moreover, note that, such pre-computation also allows for updating the nearest neighbor at each iteration during training. 

\paragraph{Obtaining gradients}
RFMs neural network minimizes~\cref{eq:RFMloss2} by explicitly predicting the steps.
When we would use our sampler from above, $d(\pose_0, \pose_1) = 1$ would hold, and the network would be trained to the distance, whose derivation (by backpropagation) provides $\frac{\grad~d(\pose_t, \pose_1)}{\norm{\grad~d(\pose_t, \pose_1}^2_g}$, which is the gradient of the distance field. Thus, in this case, the gradient of our network would coincide with the prediction of the flow matching network. Note that even with the flow matching sampling, the flow matching network prediction points in the same direction as the distance field gradient, it is just scaled by $d(\pose_0, \pose_1) \neq 1$.

\section{User Study for Evaluation Metrics}
\label{sec:user_study}
Previous studies~\cite{posemetric_GPD,posemetric_localfast} have highlighted a significant disparity between perceptual pose distance and commonly used metrics, such as differences in joint locations and orientations. The neural distance field model uses a certain distance metric to learn the relation between an arbitrary pose and the manifold of plausible poses by finding the nearest neighbor on the manifold. Consequently, \newmodel{} relies on the distance metric possessing specific properties: 1) the distance metric is well-defined, and continuous and 2) the distance metric is close to human perception. To assess these criteria, we conducted a user study comparing various metrics, including orientation, Euclidean-based distance metrics, and a combination of both. We now define each distance metric used in our user study.

\begin{algorithm}[t]
\caption{Neural Riemannian Distance Fields for Learning Articulated Pose Priors}
    \begin{algorithmic}[1]
    \Require Distribution $\mathcal{P}$, target distribution $q(\pose)$, initial parameters $\phi$ of a network $f_{\phi}$
    \Ensure Trained weights $\phi$
    \State Get training data samples $\mathcal{D}'$ via \cref{alg:sampledata_jan}
    \State Search nearest neighbour $\pose'$ and compute $d(\pose, \pose')$ for all $\pose \in \mathcal{D}'$
    \While {(not converged)}
        \State Sample a training pose $\pose_i$ from $\mathcal{D}'$
        \State Update $\phi$ by minimizing \\
        \qquad\qquad\qquad\qquad$\norm{ f_\phi(\pose_i) - \min_{\pose'\in\mathcal{D}} d(\pose_i, \pose')}$
    \EndWhile
    \end{algorithmic}
    \label{alg:NRDF} 
\end{algorithm}

\paragraph{Orientation-based metrics} We take the distance metric used by Pose-NDF~\cite{tiwari22posendf} as the candidate metric, denoting as $\Delta \q_p^l$.  We also adopt he quaternion geodesic distance $\Delta \q_g^l$, which has a more explicit physical interpretation and covers a wider range of values. For both metrics, we further calculate the distance between noisy poses and their nearest neighbor in global (relative to the root) frames, denoted as $\Delta \q_p^g$ and $\Delta \q_g^g$, respectively.

\paragraph{Euclidean-based metrics} Our Euclidean-based metrics involve calculating the average Euclidean distance over all body joints and a specific set of surface markers, denoted as \textbf{j2j} and \textbf{m2m} respectively. 

\begin{figure}[t]
    \centering
    \includegraphics[width=1.0\linewidth]{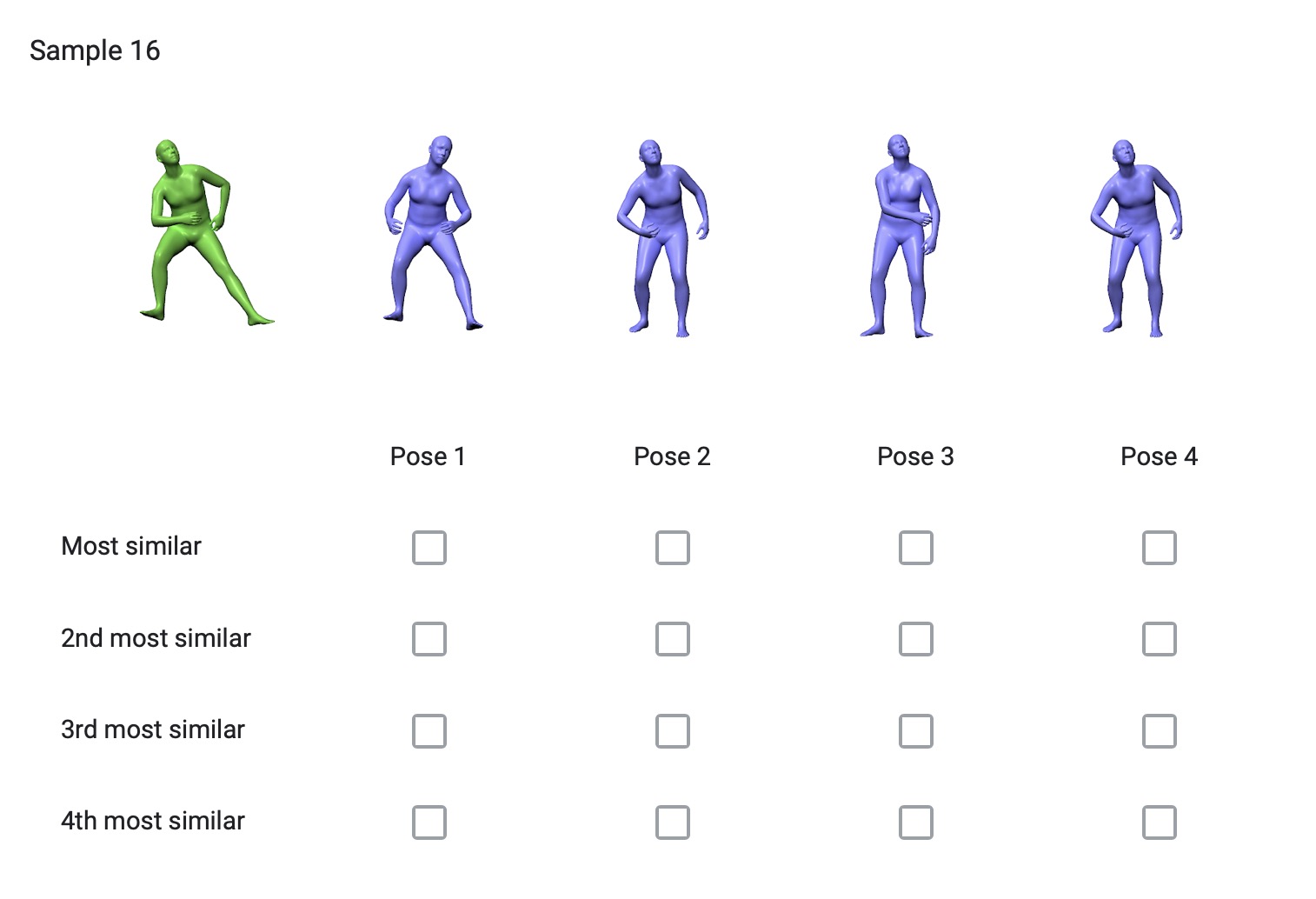}
    \caption{\textbf{User study for pose similarity assessment:} In our user interface, participants rank the similarity between a query pose (green) and its nearest neighbors (blue) from the AMASS dataset. These neighbors are obtained using different distance metrics.}
    \label{fig:user_interf}
\end{figure}

\paragraph{Combinations of orientation and Euclidean} In our observations, Euclidean-based metrics preserve the accurate overall shape of the body pose. However, they fall short of preserving the local twists of certain body joints. On the other hand, orientation-based metrics preserve precise local twists, yet they exhibit sensitivity to numerical values, resulting in divergent rotations even when the numerical values are close. To combine the strengths of both approaches, we introduce a hybrid metric, specifically defined as $\Delta \q + \text{m2m} = \text{m2m} + \lambda_{\q}\Delta \q_g^g $. This hybrid metric aims to leverage the advantages of Euclidean and orientation-based metrics, striking a balance that combines the faithful representation of the overall pose shape with the meticulous preservation of local joint twists. We set $\lambda_{\q} = 0.5$.

\paragraph{User study} We selected $\Delta \q_p^l$, $\Delta \q_g^g$, \textbf{m2m} and $\Delta \q + \text{m2m}$ as final candidates. We prepare 52 questions, each comprising the noisy pose and 4 NNs queried by a corresponding distance metric. Options are randomly shuffled in each question. As shown in~\figref{fig:user_interf}, users ranked the options from most similar to least similar, with the flexibility to assign the same rank to multiple options.

\paragraph{Result analysis} From a total of 54 responses, 32.79\% of users identified m2m as the most similar, while 30.09\% favored $\Delta \q + \text{m2m}$. For the second most similar, 29.72\% preferred $\Delta \q + \text{m2m}$, and 27.13\% chose $\Delta \q_g^g$. Following this, we use m2m and $\Delta \q + \text{m2m}$ as evaluation metrics for the ablation studies.

\section{Implementation Details}
\label{sec:more_implementation_details}
\begin{figure*}[h]
    \centering
    \includegraphics[width=1.05\textwidth]{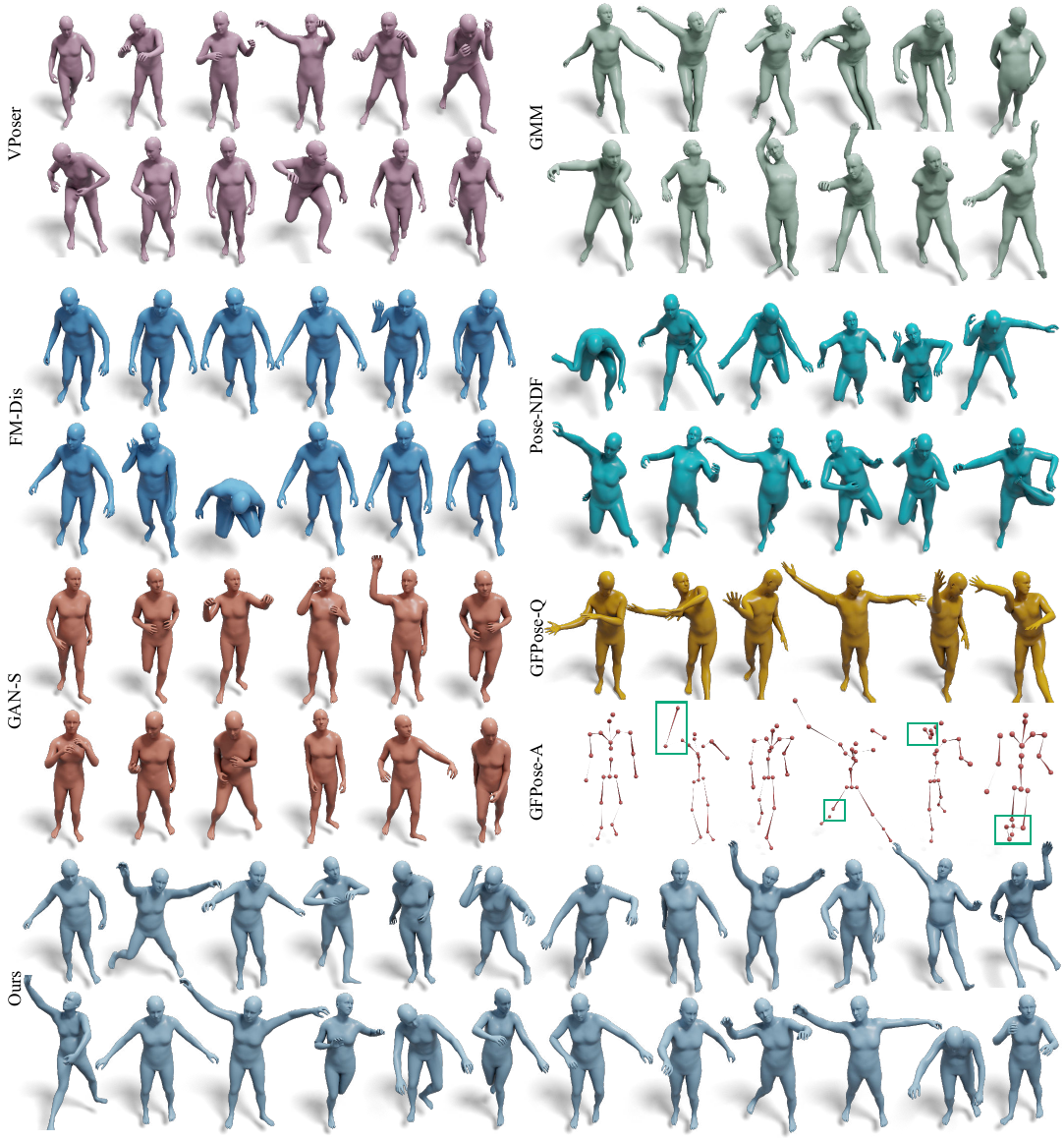}
    \caption{\textbf{Pose generation}: We compare pose generation results of our method with VPoser~\cite{SMPL-X:2019}, GMM, FM-Dis, Pose-NDF~\cite{tiwari22posendf}, GAN-S~\cite{Davydov_2022_CVPR}, GFPose-A~\cite{ci2022gfpose} and GFPose-Q. In comparison to VPoser, our method produces more diverse results. Furthermore, when compared to GMM, FM-Dis, and Pose-NDF, our method generates more realistic poses.}
    \label{fig:pose_gen_supp}
    \vspace{5pt}
\end{figure*}

In this section, we introduce the experimental setup for data preparation, network training, baselines, and optimization-based downstream tasks such as partial-IK solver and image-based pose estimation.

\begin{table*}[th]
	\begin{center}
		\centering
		
 		\resizebox{1.0\textwidth}{!}{
		\begin{tabular}{l|ccc|ccc|ccc}
		    \toprule
		     \multirow{2}{*}{Method}  & \multicolumn{3}{c|}{Occ. Single Arm}& \multicolumn{3}{c|}{Only End Effectors Visible} & \multicolumn{3}{c}{Occ. Legs}\\
			&  \fid{} $\downarrow$ & APD (in cm) $\uparrow$\  & $\distnn$ (in rad) $\downarrow$\ & \fid{} $\downarrow$ & APD (in cm) $\uparrow$\   & $\distnn$ (in rad) $\downarrow$\  & \fid{} $\downarrow$ & APD (in cm) $\uparrow$\  & $\distnn$ (in rad) $\downarrow$\  \\
			\hline
			 VPoser-Random      & $1.145^{\pm.265}$  & $3.085^{\pm.642}$ & $0.066^{\pm.001}$ &  $0.681^{\pm.091}$  &  $8.330^{\pm.783}$  & $0.067^{\pm.000}$ & $0.748^{\pm.141}$  & $6.650^{\pm.951}$   & $0.058^{\pm.001}$ \\
			 Pose-NDF~\cite{tiwari22posendf}  & $1.460^{\pm.233}$  & $16.445^{\pm2.415}$ & $0.622^{\pm.001}$   &  $2.081^{\pm..114}$ &  $31.524^{\pm.872}$  & $0.738^{\pm.001}$ & $3.015^{\pm.162}$  &  $24.831^{\pm1.328}$  &  $0.677^{\pm.001}$ \\
			 FM-Dis  & $1.150^{\pm.253}$  &  $4.967^{\pm.798}$ & $0.210^{\pm.001}$    &  $1.010^{\pm.103}$ & $10.812^{\pm1.451}$  & $0.347^{\pm.002}$ & $0.976^{\pm.158}$ &  $6.886^{\pm1.664}$  &  $0.226^{\pm.020}$  \\
			  \textbf{Ours}  & $1.306^{\pm.259}$ &  $5.892^{\pm.236}$  &   $0.132^{\pm.000}$ & $0.964^{\pm.117}$ & $10.388^{\pm.820}$ & $0.137^{\pm.001}$ & $0.899^{\pm.170}$  & $6.705^{\pm.613}$   & $0.125^{\pm.001}$ \\
			\bottomrule
            \end{tabular}
            }\vspace{-2mm}
        \caption{\textbf{Quantitative results for IK Solver from partial/sparse joints.} We run all evaluations 20 times, $\pm$ indicates the 95$\%$ confidence interval. We evaluate under 3 settings: \textbf{Occ. Single Arm},   \textbf{Only End Effectors}  (wrists and ankles) \textbf{Visible}, and \textbf{Occ. Legs}  \vspace{-7mm}}
		\label{tab:ikjoints}
	\end{center}
\end{table*}

\subsection{Data Preparation}

\paragraph{Training data} For training, we use a subset of the AMASS~\cite{AMASS:2019}. We follow the training split of AMASS used in VPoser~\cite{SMPL-X:2019} and Pose-NDF~\cite{tiwari22posendf} and assume that the AMASS training set can sufficiently explain a comprehensive and valid human pose manifold. 

To pre-process the AMASS dataset, we crop the central 80\% of each motion sequence to focus on the most informative part of the data. We apply subsampling at a rate of 30 Hz, resulting in a total of 4 million clean poses. This is similar to VPosers and Pose-NDFs training setup.
To create negative training samples along with their corresponding ground truth distances to the manifold, we sample noisy poses using  \cref{alg:sampledata_jan}, with $\mathcal{P} = \mathcal{N}(0, \pi / 4)$. Following Pose-NDF~\cite{tiwari22posendf}, to speed up the NN search process, we adopt a multi-step mechanism for querying the NN of each noisy pose.
For the first stage, we implement $k^{\prime}\text{NN}$ using FAISS~\cite{johnson2019billion}, get $k^{\prime}$ candidates. For the second stage, we use the quaternion geodesic distance to find exact $k$ neighbors from these $k^{\prime}$ candidates. 
In our implementation, we set $k^{\prime} = 1000$ and $k=1$. Note that we determine the final distance by considering only the closest NN, deviating from the approach in Pose-NDF~\cite{tiwari22posendf}, where the average distance over the top 5 NNs is computed. This is motivated by the observation that the top 5 NNs may exhibit discontinuities, and averaging their distances tends to over-smooth the manifold boundary. 

\paragraph{Evaluation and validation} For validation, we utilize the validation split of the AMASS dataset, specifically we use Human Eva, MPI-HDM05, SFU, and MPI Mosh. For testing the accuracy and convergence speed across various baselines, we take the test split of AMASS dataset, specifically, we use SSM-Synced and Transitions. The distance values are computed in reference to the training data.

\subsection{Network Training}

~\cref{alg:NRDF} shows our training procedure. Specifically, we employ a hierarchical neural implicit network to implement \newmodel{}, following the approach outlined in~\cite{tiwari22posendf}. The network takes a quaternion representation of SMPL pose as input and produces a scalar distance field as output. We adopt a two-stage training strategy. For the first stage, each training batch comprises a balanced combination of 50\% noisy poses and 50\% clean poses. Subsequently, to enhance generalization to downstream tasks, in the second stage, we fine-tune our model using poses sampled from a standard normal distribution $\mathcal{N}(0, \mathbf{I}) \in \mathbb{R}^{4K}$. We set the learning rate to 1e-4. The entire training process requires 8-10 hours with a single GeForce RTX 3090 GPU. 

\subsection{Baseline Details}

In this section, we present implementation details of baselines. Our evaluation focuses on the \textbf{pose denoising} task, where we compare finally converged poses with their ground truth nearest neighbors. We sample 20k noisy poses by using  \cref{alg:sampledata_jan} based on AMASS~\cite{AMASS:2019} test set. We now first investigate the significance of our Riemannian projection (\textbf{Ours} v/s \textbf{Ours w/o \RDFGrad{}}) and sampling method (\textbf{Pose-NDF} v/s \textbf{Ours w/o \RDFGrad{}}). This is followed by comparison with a closely related Riemannian Flow Matching~\cite{chen2023riemannian} based distance field (\textbf{{Ours v/s FM-Dis}}) and an ablation on distance v/s gradient field modeling.

\paragraph{Ours v/s Ours w/o \RDFGrad{}} In this study, we evaluate the significance of our novel adaptive-step Riemannian gradient descent algorithm, termed as \RDFGrad{}. For \textbf{Ours w/o \RDFGrad{}} we use standard stochastic gradient descent (SGD) in Euclidean space. The results presented in~\tabref{tab:ablation} illustrate that our approach achieves 6$\times$ acceleration in convergence speed thanks to the gradient update on the Riemannian quaternion manifold. Please refer to the supplementary video for qualitative comparisons. The convergence criterion is based on the absolute difference between the previously predicted distance and the current predicted distance being less than 1e-5.

\paragraph{Ours w/o (\RDFGrad{}, $\newdist{}$) v/s Pose-NDF} In this study we evaluate the significance of our novel training data sampling strategy. For this we compare \textbf{Pose-NDF}, which uses a naive sampling strategy with \textbf{Ours w/o (\RDFGrad{}, $\newdist{}$)}, which is basically Pose-NDF with our novel sampling strategy. From~\tabref{tab:ablation}, we observe that just changing the sampling strategy, reduces the m2m distance from 25.04 cm to 15.16 cm, which shows the significance of the distance distribution of training data.

\paragraph{Ours v/s FM-Dis} In order to connect with the recent Riemannian flow matching, we introduce a new baseline, called \textbf{FM-Dis}, which extends RFM to model the pose prior as a distance field. Instead of predicting the gradient, our variation \textbf{FM-Dis} predicts the distance between $\pose_t$ and the corresponding clean pose $\pose_1$ without recomputing the nearest neighbor. Specifically, we minimize $\Loss_{\mathrm{FM-Dis}}$, given by~\eqnref{eq:rfm-dis}, where $\pose_t$ is sampled by using \cref{eq:RFMUpdate}, 
\begin{align}
&\Loss_{\mathrm{FM-Dis}} (\phi) \\
&= \E_{t\sim {\cal U}(0,1), q(\pose_1), p(\pose_0) } \| v_{\phi}(\pose_t) - d(\pose_t, \pose_1) \|_g^2  \label{eq:rfm-dis} 
\end{align}

It is apparent that $\pose_t$ is evenly interpolated between the noise and a particular clean sample, which stands in contrast to our distribution, where the number of samples gradually decreases as one moves outward from the manifold. As shown in~\figref{fig:pose_gen_supp}, FM-Dis tends to generate poses close to the mean. 
We show qualitative comparisons in our supplementary video.

\paragraph{Distance v/s gradient prediction} To connect our model with diffusion / score-based methods and flow matching-based methods, we implement \textbf{Gradient prediction w/o time}, \textbf{FM-Grad w/ time} and \textbf{GFPose-Q}. These approaches explicitly predict either the gradient or the gradient direction, while our method derives the gradient of distance with respect to the input pose through network back-propagation. 
Predicting full gradients (including length) without time-step conditioning is challenging for neural networks, leading to significant errors. Therefore, \textbf{Gradient prediction w/o time} is designed to predict the gradient direction (with normalized length) between the noisy pose and its nearest neighbor only. Noisy poses are sampled using \cref{alg:sampledata_jan} in this case. We additionally incorporate Riemannian flow matching (RFM)~\cite{chen2023riemannian} into our experiment, denoting as \textbf{FM-Grad}. Different from RFM, we maximize the cosine similarity between the network prediction and gradient, thus, minimizing $\Loss_{\mathrm{FM-Grad}}$, given by ~\eqnref{eq:rfm-grad}, where $t \sim {\cal U}(0,1)$ as above and $\pose_t$ is obtained in the same manner as FM-Dis. We set $T = 1000$ during training. 

\begin{align}
\label{eq:rfm-grad}
&\Loss_{\mathrm{FM-Grad}}(\phi) \\
&= - \E_{t, q(\pose_1), p(\pose_0) } \left| \frac{v_{\phi}(\pose_t, t) \cdot \text{Log}_{\pose_t}(\pose_1)} { \| v_{\phi}(\pose_t, t)  \|_g^2  \| \text{Log}_{\pose_t}(\pose_1) \|_g^2}   \right| \textrm{.}\nonumber
\end{align}

For test-time projection, we follow ~\eqnref{eq:RGD} using $v_{\phi}(\pose_t, t)$ as the gradient. We set $\tau = 0.01$ and the initial time-step $T^{\prime} = 200$. For training Gradient prediction w/o time and FM-Grad, we employ the same network architecture as GFPose~\cite{ci2022gfpose}. The convergence criterion is based on the absolute difference between the predicted gradient norm at $t+1$ and $t$ being less than 1e-5. Regarding \textbf{GFPose-Q}, we retrain it using the quaternion representation on the AMASS dataset, with $\emptyset$ conditioning.  Since gradient prediction is less accurate than distance prediction, results based on gradient prediction tend to exhibit either over-correction or unrealistic poses.
Please refer to the supplementary video for qualitative comparisons.

\begin{figure*}[t]
    \centering
    \includegraphics[width=1.05\textwidth]{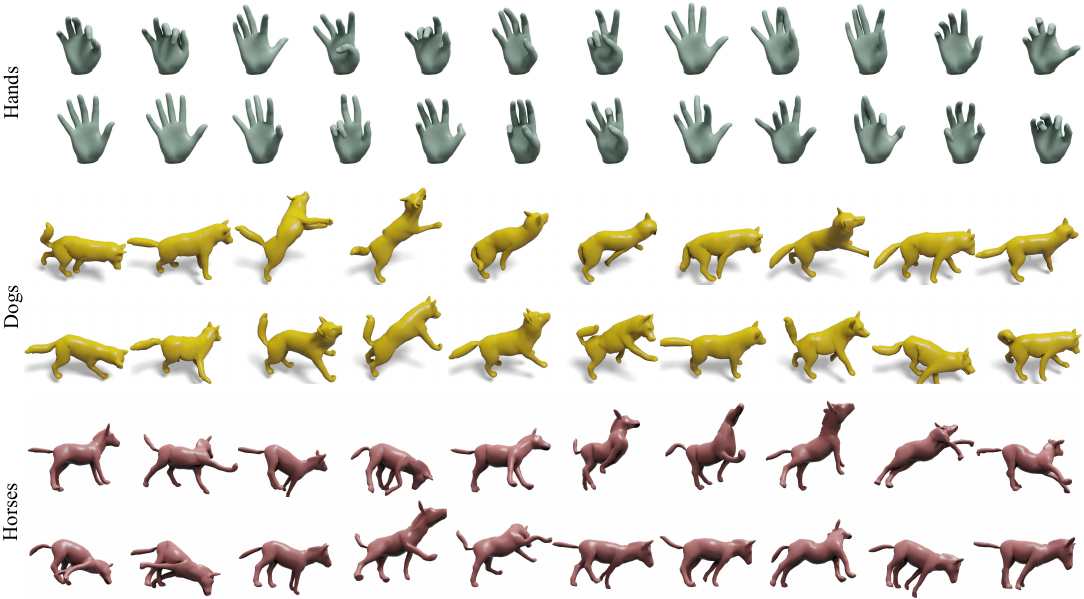}
    \caption{\textbf{Animal and Hand pose generation:} We generate diverse animal and hand poses using \newmodel{}.}
    \label{fig:ani_gen}
\end{figure*}

\subsection{IK Solver Setup}
We utilize the AMASS test set and compute ground truth marker or joint positions through forward kinematics. The overall loss function, based on  \cref{eq:opt_generic}, encompasses a data term defined by the L2 loss between predicted marker/joint locations and observations.
Given that most off-the-shelf optimizers in PyTorch are SGD-based or its variations, and there is no optimizer designed for quaternions in geoopt~\cite{geoopt2020kochurov}, we introduce a custom optimizer specifically designed for \RDFGrad{}. This involves obtaining the Euclidean gradient returned by the network and projecting it onto the Riemannian quaternion manifold using Eq. \ref{eq:egrad2rgrad}. We plan to release our code for public use and stimulating future research.
During optimization, for VPoser, VPoser-Random and Pose-NDF, we set  $\lambda_{\pose} = 0.1$, $\lambda_{\beta} = 0.05$ and $\lambda_{\alpha} = 0.001$. For FM-Dis and \newmodel{}, we set $\lambda_{\pose} = 5.0$, $\lambda_{\beta} = 0.05$ and $\lambda_{\alpha} = 0$. Concerning our \RDFGrad{}-based optimizer, for an effective initialization of the prior loss, we exclude the data term in the first stage and optimize only the prior term. Subsequently, we combine all loss terms for joint optimization.
The stopping criterion for all experiments is set as $\text{MPJPE} = 3 \text{ cm}$.

\begin{figure}[h]
    \centering
      \begin{overpic}[width=0.5\textwidth,unit=1bp,tics=20]{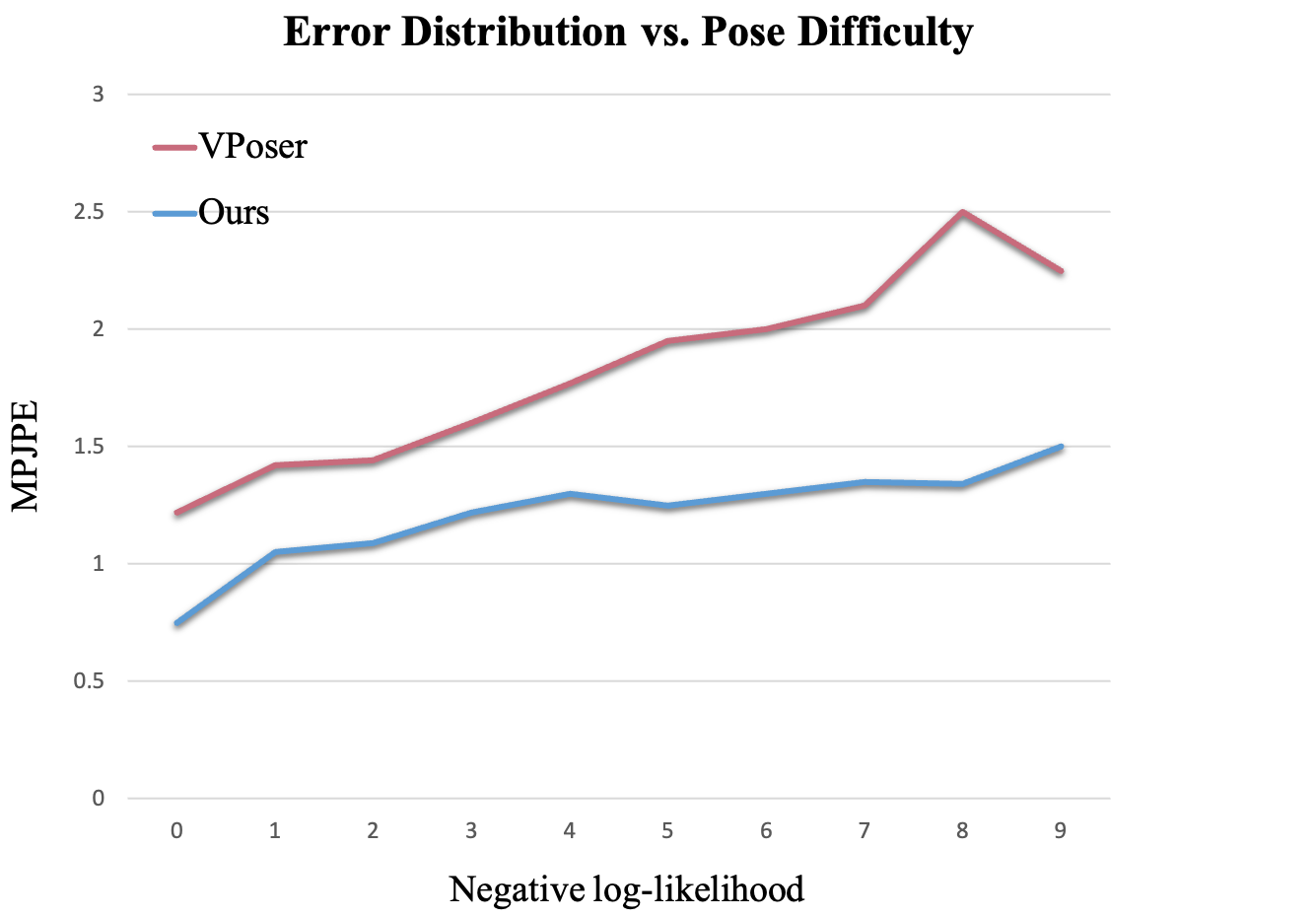}
      \put(1,72){\rotatebox{90}{\colorbox{white}{\scriptsize MPJPE (cm)}}}
        \put(37,143){\colorbox{white}{\scriptsize VPoser}}
        \put(37,130){\colorbox{white}{\scriptsize Ours}}
        \put(80,5){\colorbox{white}{\scriptsize Negative log-likelihood}}
    \end{overpic}
    \caption{\textbf{Error Distribution vs. Pose Difficulty:} X axis represents the relative negative log likelihood (NLL) while Y axis represents the MPJPE between the result joint locations and corresponding observations.}
    \label{fig:pose_difficulty}
\end{figure}

\begin{figure*}[t]
    \centering
    \includegraphics[width=1.05\textwidth]{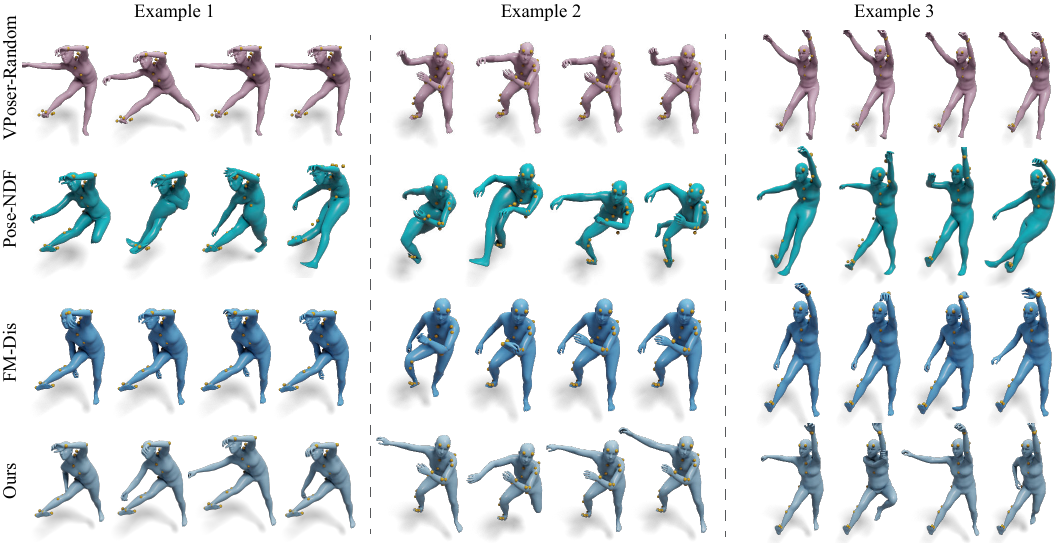}
    \caption{\textbf{IK Solver for one arm and one leg occluded:} Given partial observation, where one leg and one arm are occluded, we perform 3D pose completion. We observe that VPoser~\cite{SMPL-X:2019} based optimization generates realistic, yet very limited diversity in poses. Pose-NDF~\cite{tiwari22posendf} generates more diverse, but unrealistic poses. FM-Dis also generates limited diversity in poses. NRDF generates diverse and realistic poses as compared to the aforementioned pose priors.}
    \label{fig:ik_armleg_supp}
\end{figure*}

\subsection{Image-based Tasks Setup}
For evaluating the impact of our prior on human pose estimation from images, we use 3DPW dataset~\cite{vonMarcard2018}. We conduct the evaluation on the test split of 3DPW, where ground truth for a single person in the image is available. We crop the images using the GT 2D keypoints and discard the images where SMPLer-X doesn't predict any person. This amounts to roughly ~5k images. We use ``SMPLer-X-S32'' for predicting SMPL-X parameters and use optimization-based processing to convert SMPL-X parameters to SMPL~\cite{SMPL-X:2019}. In particular, we want to refine network prediction using optimization-based refinement. We use SMPLer-X~\cite{cai2023smpler} for predicting human pose $\hat{\pose}$ and shape $\hat{\shape}$ from images and then use optimization loss mentioned in \cref{eq:opt_generic} and \cref{eq:image} to refine the predictions. For VPoser, we optimize the latent vector $z$ of VAE, where $z$ is initialized from the VAE encoding of predicted pose or $\hat{\pose}$. More specifically  $z_{\mathrm{init}} = f_{\mathrm{VE}}(\hat{\pose})$, where $f_{\mathrm{VE}}$ is encoder of VPoser. For Pose-NDF, FM-Dis, and Our prior optimization, we simply optimize for $\pose$ and the variable is initialized using $\hat{\pose}$. We also optimize for SMPL shape ($\shape$) parameters in both setups. For the evaluation metric, we have used PA-MPJPE (Procrustes aligned-MPJPE), PA-PVE (Procrustes aligned-per-vertex error).

\section{Additional Results}
\label{sec:more_results}
We now provide additional qualitative and quantitative results.

\begin{figure*}[h]
    \centering
    \includegraphics[width=1.05\textwidth]{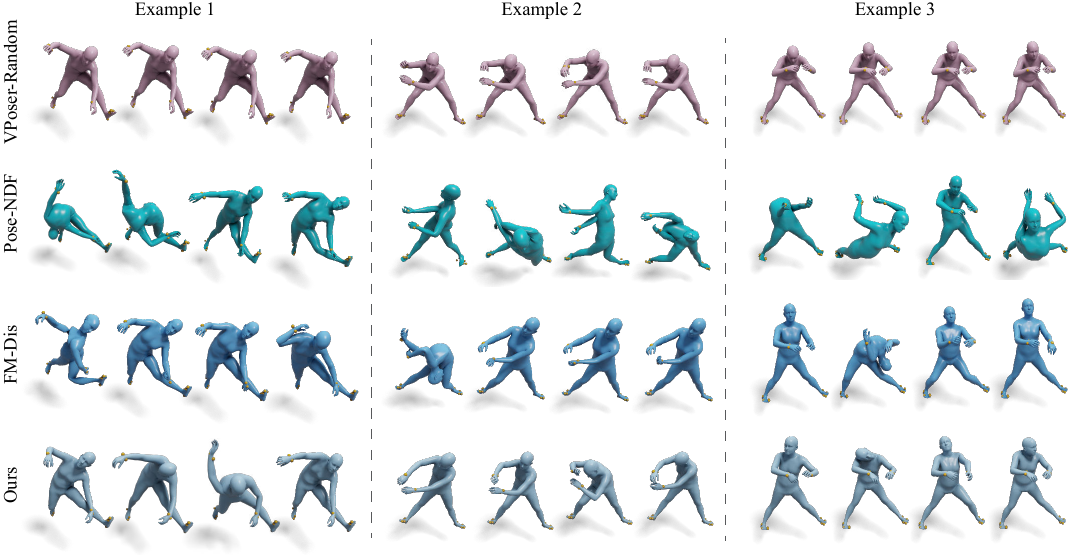}
    \caption{\textbf{IK Solver for visible end-effectors:} Given partial observation, where only end-effectors are visible (yellow markers), we perform 3D pose completion. We observe that VPoser~\cite{SMPL-X:2019} based optimization generates realistic, yet very limited diversity in poses or almost similar poses. Pose-NDF~\cite{tiwari22posendf} and FM-Dis result unrealistic poses for such sparse observations. NRDF generates diverse and realistic poses as compared to the aforementioned pose priors.}
    \label{fig:ik_end_supp}
\end{figure*}

\begin{figure*}[h]
    \centering
    \includegraphics[width=1.05\textwidth]{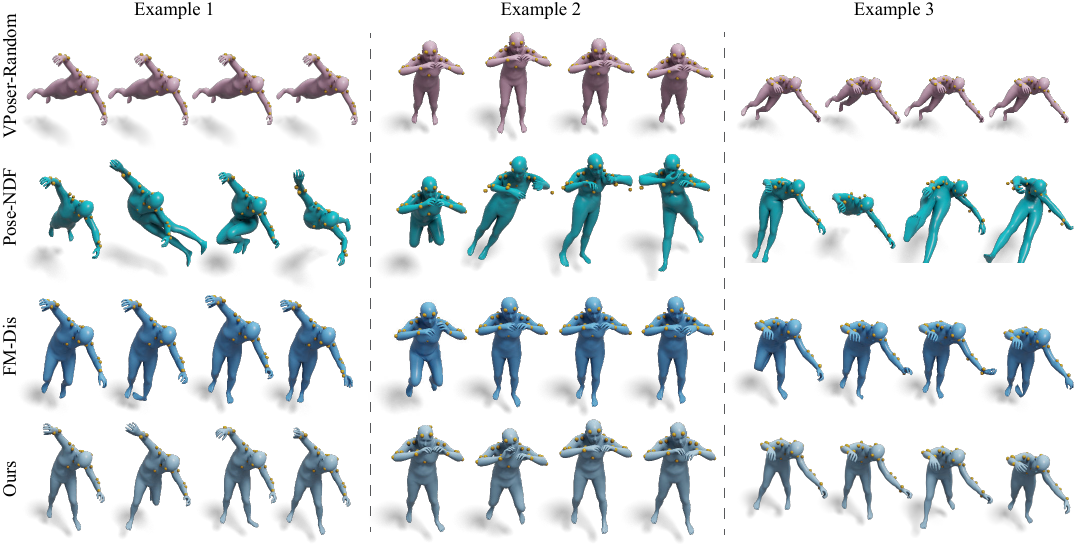}
    \caption{\textbf{IK Solver for occluded legs:} Given partial observation, where only one leg is occluded, we perform 3D pose completion. We observe that VPoser~\cite{SMPL-X:2019} based optimization generates realistic, yet very limited diversity in poses or almost similar poses. Pose-NDF~\cite{tiwari22posendf} results unrealistic poses. FM-Dis generates almost similar poses and has no diversity. NRDF generates diverse and realistic poses as compared to the aforementioned pose priors.}
    \label{fig:ik_upper_supp}
\end{figure*}

\begin{figure*}[t]
    \centering
    \includegraphics[width=1.05\textwidth]{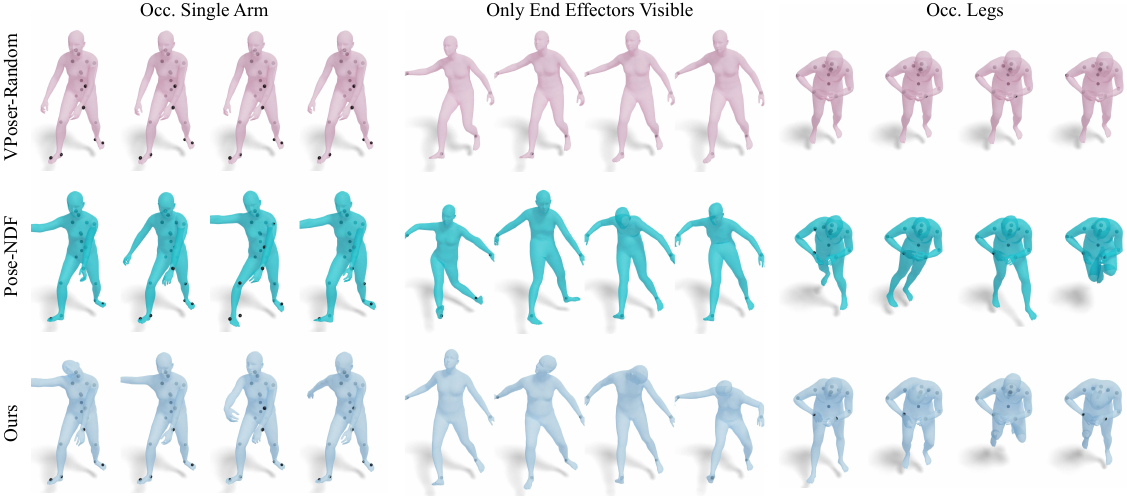}
    \vspace{-2mm}
	\caption{\textbf{IK Solver from partial/sparse joints:} Given partial observation (black joints), we perform 3D pose completion. We observe that VPoser~\cite{SMPL-X:2019} based optimization generates realistic, yet less diverse poses. Pose-NDF~\cite{tiwari22posendf} generates more diverse, but sometimes unrealistic poses, especially in case of very sparse observations. \newmodel{} generates diverse and realistic poses in all setups.}
	\label{fig:partialIK_joints}
\end{figure*}

\subsection{IK solver from partial surface markers or body joints}

We show more IK results from partial observations. For surface markers, we observe that for Occ. Single Arm and Visible end-effectors setup, our model generates more diverse poses based on APD. We also show qualitative results in~\figref{fig:ik_end_supp}. Note that despite the numerical diversity of VPoser, it exhibits fewer diverse poses for occluded legs compared to \newmodel{}. As depicted in~\figref{fig:ik_upper_supp}, the legs of VPoser tend to move together without interaction between them, which could also result in a large APD value. In contrast, our method demonstrates more diverse leg poses, including bending of knees. We provide results for another setup in~\figref{fig:ik_armleg_supp}, where one arm and one leg are occluded.
Given that body joints are more underconstrained and challenging, we further evaluate our IK solver on partial body joint observations.~\tabref{tab:ikjoints} and~\figref{fig:partialIK_joints} illustrate the IK results, showcasing that our method achieves accurate IK while maintaining more diversity.

\subsection{Monocular 3D Pose Estimation from Images}

\begin{figure*}[t]
    \centering
    \includegraphics[width=1.05\textwidth]{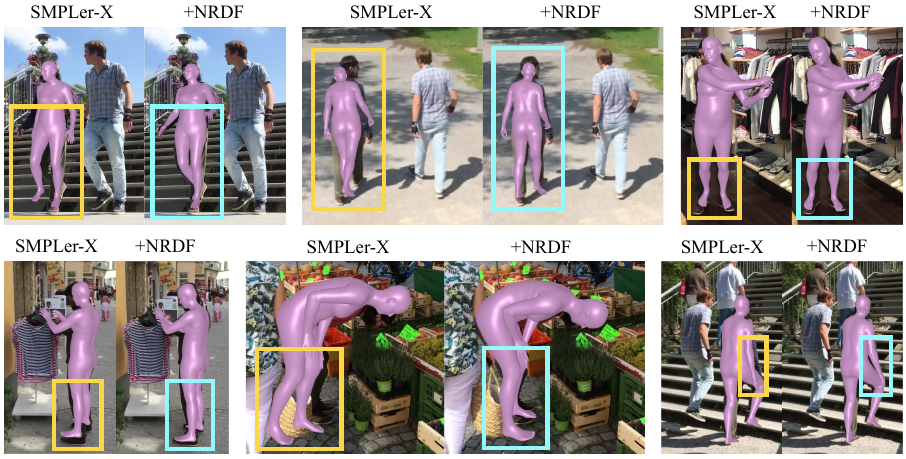}
	\caption{ \textbf{3D pose and shape estimation from images:} (Top): We refine the results of SMPLer-X~\cite{cai2023smpler} network prediction using \newmodel{} based optimization pipeline. }
	\label{fig:hps_suppl}
\end{figure*}

We provide more qualitative results for 3D pose estimation from images in ~\figref{fig:hps_suppl}.

\subsection{More Pose Generation Results}

~\figref{fig:pose_gen_supp} shows additional generation results from different methods. Note that poses generated by GMM can appear unrealistic and can have implausible bends around joints such as elbow or shoulder joints, as shown in~\figref{fig:pose_gen_supp} (top-right). VPoser~\cite{SMPL-X:2019} tends to generate more standing and less diverse poses. This is attributed to Gaussian assumption of the latent space. FM-Dis also generates less diverse poses, close to mean poses and some times results in unrealistic poses as well. Pose-NDF~\cite{tiwari22posendf} generates diverse poses, in terms of bends around knees, elbows \etc but at the same time, it results in implausible poses. This is attributed to the inaccurate distance field of Pose-NDF. GAN-S~\cite{Davydov_2022_CVPR} also tends to generate less diverse pose, as compared to Pose-NDF. We also compare with a diffusion-based model GFPose~\cite{ci2022gfpose}. We retrain GFPose on AMASS dataset and call it GFPose-A. Since this is joint-location based model, we observe that the generated results might have inconsistent bone lengths, as highlighted in~\figref{fig:pose_gen_supp}. We also retrain GFPose on quaternions, denoting as GFPose-Q, which similarly generates less diverse and unrealistic poses.

We further show more results for hand and animal pose generation in~\figref{fig:ani_gen}.

\subsection{Error Distribution vs. Pose Difficulty}

Poses generated by VPoser~\cite{SMPL-X:2019} exhibit a tendency to cluster around mean poses, given it is based on Gaussian assumptions. In this subsection, we explore the correlation between location error and pose difficulty in partial Inverse Kinematics (IK) tasks. The observed relationship is visually depicted in~\figref{fig:pose_difficulty}. It's noteworthy that as the ground truth pose becomes less common (indicated by larger Negative Log-Likelihood (NLL) values), the difference between Mean Per Joint Position Error (MPJPE) of VPoser and \newmodel{} tends to increase. However, \newmodel{} consistently maintains a smaller error, remaining under 1.5 cm. The first column of~\figref{fig:ik_armleg_supp} (foot part of VPoser) also shows that VPoser fails to meet the observations when the given pose is uncommon.

\end{document}